\newtheorem{definition}{Definition}
\newtheorem{thm}{Theorem}
\newtheorem{lemma}{Lemma}
\newtheorem{ass}[thm]{Assumption}
\def \E {\mathrm{E}}
\def \u {\mathbf{u}}
\def \R {\mathbb{R}}
\def \a {\mathbf{a}}
\def \b {\mathbf{b}}
\def \P {\mathbb{P}}
\def \E {\mathrm{E}}
\def \R {\mathbb{R}}
\def \u {\mathbf{u}}
\def \x {\mathbf{x}}
\def \X {\mathcal{X}}
\def \Y {\mathcal{Y}}
\def \y {\mathbf{y}}
\def \W {\mathcal{W}}
\def \w {\mathbf{w}}
\def \yh {\widehat{\y}}
\def \yT {\y^{\text{LS}}}
\begin{document}
\title{\bf Towards Understanding Label Smoothing}
\author{Yi Xu, Yuanhong Xu, Qi Qian, Hao Li, Rong Jin\\ 
Machine Intelligence Technology, Alibaba Group\\
\{yixu, yuanhong.xuyh, qi.qian, lihao.lh, jinrong.jr\}@alibaba-inc.com
}
\date{First Version: June 15, 2020\\
Second Version: October 2, 2020} 
\maketitle

\begin{abstract}
Label smoothing regularization (LSR) has a great success in training  deep neural networks by stochastic algorithms such as stochastic gradient descent and its variants. However, the theoretical understanding of its power from the view of optimization is still rare. This study opens the door to a deep understanding of LSR by initiating the analysis. In this paper, we analyze the convergence behaviors of stochastic gradient descent with label smoothing regularization for solving non-convex problems and show that an appropriate LSR can help to speed up the convergence by reducing the variance. More interestingly, we proposed a simple 
yet effective strategy, namely {\bf T}wo-{\bf S}tage {\bf LA}bel smoothing algorithm (TSLA), that uses LSR in the early training epochs and drops it off in the later training epochs. We observe from the improved convergence result of TSLA that it benefits from LSR in the first stage and essentially converges faster in the second stage. To the best of our knowledge, this is the first work for understanding the power of LSR via establishing convergence complexity of stochastic methods with LSR in non-convex optimization.  We empirically demonstrate the effectiveness of the proposed method in comparison with baselines on training ResNet models over benchmark data sets.
\end{abstract}

\section{Introduction}
In training deep neural networks, one common strategy is to minimize cross-entropy loss with one-hot label vectors, which may lead to overfitting during the training progress that would lower the generalization accuracy~\citep{muller2019does}. To overcome the overfitting issue, several regularization techniques such as $\ell_1$-norm or $\ell_2$-norm penalty over the model weights, Dropout which randomly sets the outputs of neurons to zero~\citep{hinton2012improving}, batch normalization~\citep{ioffe2015batch}, and data augmentation~\citep{simard1998transformation}, are employed to prevent the deep learning models from becoming over-confident. However, these regularization techniques conduct on the hidden activations or weights of a neural network. As an output regularizer, label smoothing regularization (LSR)~\citep{szegedy2016rethinking} is proposed to improve the generalization and learning efficiency of a neural network by replacing the one-hot vector labels with the smoothed labels that average the hard targets and the uniform distribution of other labels. Specifically, for a $K$-class classification problem, the one-hot label is smoothed by $\yT = (1-\theta)\y + \theta \yh$, where $\y$ is the one-hot label, $\theta\in(0,1)$ is the smoothing strength and $\yh = \frac{{\bf1}}{K}$ is a uniform distribution for all labels. Extensive experimental results have shown that LSR has significant successes in many deep learning applications including image classification~\citep{zoph2018learning, he2019bag}, speech recognition~\citep{chorowski2017towards, zeyer2018improved}, and language translation~\citep{vaswani2017attention, nguyen2019transformers}. 

Due to the importance of LSR, researchers try to explore its behavior in training deep neural networks. \cite{muller2019does}~have empirically shown that the LSR can help improve model calibration, however, they also have found that LSR could impair knowledge distillation, that is, if one trains a teacher model with LSR, then a student model has worse performance. \cite{yuan2019revisit} have proved that LSR provides a virtual teacher model for knowledge distillation. 
As a widely used trick, \cite{lukasik2020does} have shown that LSR works since it can successfully mitigate label noise.  However, to the best of our knowledge, it is unclear, at least from a theoretical viewpoint, how the introduction of label smoothing will help improve the training of deep learning models, and to what stage, it can help. In this paper, we aim to provide an affirmative answer to this question and try to deeply understand why and how the LSR works from the view of optimization. Our theoretical analysis will show that an appropriate LSR can essentially reduce the variance of stochastic gradient in the assigned class labels and thus it can speed up the convergence.  Moreover, we will propose a novel strategy of employing LSR that tells when to use LSR. We summarize the main contributions of this paper as follows.
\begin{itemize}
\item It is the {\bf first work} that establishes improved iteration complexities of stochastic gradient descent (SGD)~\citep{robbins1951stochastic} with LSR for finding an $\epsilon$-approximate stationary point~(Definition~\ref{def:stationary}) in solving a smooth non-convex problem in the presence of an appropriate label smoothing. The results theoretically explain why an appropriate LSR can help speed up the convergence. (Section~\ref{sec:LSR})
\item We propose a simple yet effective strategy, namely {\bf T}wo-{\bf S}tage {\bf LA}bel smoothing (TSLA) algorithm, where in the first stage it trains models for certain epochs using a stochastic method with LSR while in the second stage it runs the same stochastic method without LSR. The proposed TSLA is a generic strategy that can incorporate many stochastic algorithms. With an appropriate label smoothing, we show that TSLA integrated with SGD has an {\bf improved} iteration complexity, compared to the SGD with LSR and the SGD without LSR. (Section~\ref{sec:TSLA})
\end{itemize}
\section{Related Work}
In this section, we introduce some related work. A closely related idea to LSR is confidence penalty proposed by~\cite{pereyra2017regularizing}, an output regularizer that penalizes confident output distributions by adding its negative entropy to the negative log-likelihood during the training process. The authors~\citep{pereyra2017regularizing} presented extensive experimental results in training deep neural networks to demonstrate better generalization comparing to baselines with only focusing on the existing hyper-parameters. They have shown that LSR is equivalent to confidence penalty with a reversing direction of KL divergence between uniform distributions and the output distributions. 

DisturbLabel introduced by~\cite{xie2016disturblabel} imposes the regularization within the loss layer, where it randomly replaces some of the ground truth labels as incorrect values at each training iteration. Its effect is quite similar to LSR that can help to prevent the neural network training from overfitting. The authors have verified the effectiveness of DisturbLabel via several experiments on training image classification tasks.  

Recently, many works~\citep{zhang2018mixup, bagherinezhad2018label, goibert2019adversarial, shen2019defending, li2020colam} explored the idea of LSR technique. \cite{ding2019adaptive} extended an adaptive label regularization method, which enables the neural network to use both correctness and incorrectness during training. \cite{pang2018towards} used the reverse cross-entropy loss to smooth the classifier's gradients. 
\cite{wang2020inference} proposed a graduated label smoothing method that uses the higher smoothing penalty for high-confidence predictions than that for low-confidence predictions. They found that the proposed method can improve both inference calibration and translation performance for neural machine translation models. By contrast, in this paper, we will try to understand the power of LSR from an optimization perspective and try to study how and when to use LSR. 

\section{Preliminaries and Notations}
We first present some notations. Let $\nabla_\w F(\w)$ denote the gradient of a function $F(\w)$. When the variable to be taken a gradient is obvious, we use $\nabla F(\w)$ for simplicity. We use $\|\cdot\|$ to denote the Euclidean norm. Let $\langle \cdot,\cdot\rangle$ be the inner product.

In classification problem, we aim to seek a classifier to map an example $\x\in\X$ onto one of $K$ labels $\y\in\Y \subset \R^K$, where $\y=(y_1,y_2,\dots,y_K)$ is a one-hot label, meaning that $y_i$ is ``1" for the correct class
and ``0" for the rest. Suppose the example-label pairs are draw from a distribution $\P$, i.e., $(\x,\y) \sim \P = (\P_\x, \P_\y)$. 
we denote by $\E_{(\x,\y)}[\cdot]$ the expectation that takes over a random variable $(\x,\y)$. When the randomness is obvious, we write $\E[\cdot]$ for simplicity. Our goal is to learn a prediction function $f(\w;\x): \W\times\X \to \R^K$ that is as close as possible to $\y$, where $\w\in\W$ is the parameter and $\W$ is a closed convex set. To this end, we want to minimize the following expected loss under $\P$:
\begin{align}\label{opt:prob}
   \min_{\w\in\W} F(\w) := \E_{
   (\x,\y)}\left[\ell(\y, f(\w;\x)) \right],
\end{align}
where $\ell: \Y \times \R^K \to \R_+$ is a cross-entropy loss function given by
\begin{align}\label{loss:CE}
   \ell(\y,f(\w;\x)) = \sum_{i=1}^{K} -y_i\log\left(\frac{\exp(f_i(\w;\x))}{\sum_{j=1}^{K}\exp(f_j(\w;\x))}\right).
\end{align}
The objective function $F(\w)$ is not convex since $f(\w;\x)$ is non-convex in terms of $\w$. To solve the problem~(\ref{opt:prob}), one can simply use some iterative methods such as stochastic gradient descent (SGD). Specifically, at each training iteration $t$, SGD updates solutions iteratively by
\begin{align*}
    \w_{t+1} = \w_t - \eta \nabla_\w \ell(\y_t, f(\w_t;\x_t)),
\end{align*}
where $\eta>0$ is a learning rate. 

Next, we present some notations and assumptions that will be used in the convergence analysis. Throughout this paper, we also make the following assumptions for solving the problem (\ref{opt:prob}).
\begin{ass}\label{ass:2}
Assume the following conditions hold: 
\begin{itemize}
\item[(i)]  The stochastic gradient of $F(\w)$ is unbiased, i.e., $\E_{(\x,\y)}[\nabla \ell(\y, f(\w;\x))] = \nabla F(\w)$, and the variance of stochastic gradient is bounded, i.e., there exists a constant $\sigma^2>0$, such that $$\E_{(\x,\y)}\left[\left\|\nabla \ell(\y, f(\w;\x)) - \nabla F(\w)\right\|^2\right] = \sigma^2.$$ 
\item[(ii)] $F(\w)$ is smooth with an $L$-Lipchitz continuous gradient, i.e., it is differentiable and there exists a constant $L>0$ such that $$\|\nabla F(\w)  - \nabla F(\u)\|\leq L\|\w - \u\| ,\forall \w, \u \in\W.$$
\end{itemize}
\end{ass}
{\bf Remark.} Assumption~\ref{ass:2} (i) and (ii) are commonly used assumptions in the literature of non-convex optimization~\citep{ghadimi2013stochastic,yangnonconvexmo, yuan2019stagewise, wang2019spiderboost, li2020exponential}. 
Assumption~\ref{ass:2} (ii) says the objective function is $L$-smooth, and it has an equivalent expression~\citep{opac-b1104789} which is $$F(\w) - F(\u) \le \langle \nabla F(\u), \w - \u \rangle + \frac{L}{2}\|\w-\u\|^2, \forall \w, \u \in \W.$$   

Let $\yh$ be a label introduced for smoothing label. Then the smoothed label $\yT$ is given by
\begin{align}\label{label:smooth}
\yT = (1 - \theta) \y + \theta \yh,
\end{align}
where $\theta \in (0, 1)$ is the smoothing strength, $\y$ is the one-hot label.  
Similar to label $\y$, we suppose the label $\yh$ is draw from a distribution $\P_{\yh}$. We introduce the variance of stochastic gradient using label $\yh$ as follows. 
\begin{align}\label{variance:output:sm:label}
    \E_{(\x,\yh)}\left[\left\|\nabla \ell(\yh, f(\w;\x)) - \nabla F(\w)\right\|^2\right] = \widehat \sigma^2 := \delta\sigma^2.
\end{align}
where $\delta > 0$ is a constant and $\sigma^2$ is defined in Assumption~\ref{ass:2} (i). We make several remarks for (\ref{variance:output:sm:label}).\\
{\bf Remark.} (a) We do not require the stochastic gradient $\nabla \ell(\yh, f(\w;\x))$ is unbiased, i.e., it could be $\E[\nabla \ell(\yh, f(\w;\x))] \neq \nabla F(\w)$. (b) The variance $\widehat\sigma^2$ is defined based on the label $\yh$ rather than the smoothed label $\yT$. (c) We do not assume the variance $\widehat\sigma^2$ is bounded since $\delta$ could be an arbitrary value, however, we will discuss the different cases of $\delta$ in our analysis. If $\delta\ge 1$, then $\widehat\sigma^2 \ge \sigma^2$; while if $0<\delta<1$, then $\widehat\sigma^2 < \sigma^2$. It is worth mentioning that $\delta$ could be small when an appropriate label is used in the label smoothing. For example, one can smooth labels by using a teacher model~\citep{hinton2015distilling} or the model's own distribution~\citep{reed2014training}. In the first paper of label smoothing~\citep{szegedy2016rethinking} and the following related studies~\citep{muller2019does,yuan2019revisit}, researchers consider a uniform distribution over all $K$ classes of labels as the label $\yh$, i.e., set $\yh = \frac{{\bf 1}}{K}$. 

We now introduce an important assumption regarding $F(\w)$, i.e. there is no very bad local optimum on the surface of objective function $F(\w)$. More specifically, the following assumption holds.
\begin{ass}\label{ass:3}
There exists a constant $\mu>0$ such that $2\mu(F(\w)- F_*) \le \|\nabla F(\w)\|^2, \forall \w\in\W$, where $F_* = \min_{\w\in\W} F(\w)$ is the optimal value.
\end{ass}
{\bf Remark.} This property is known as Polyak-\L ojasiewicz (PL) condition~\citep{polyak1963gradient}, and it has been theoretically and empirically observed in training deep neural networks~\citep{allen2019convergence, yuan2019stagewise}. This condition is widely used to establish convergence in the literature of non-convex optimization, please see~\citep{yuan2019stagewise, wang2019spiderboost, karimi2016linear, li2018simple, charles2018stability, li2020exponential} and references therein. 

To measure the convergence of non-convex and smooth optimization problems as in~\citep{nesterov1998introductory, ghadimi2013stochastic,yangnonconvexmo}, we need the following definition of the first-order stationary point.
\begin{definition}[First-order stationary point]\label{def:stationary} For the problem of $\min_{\w\in\W}F(\w)$, a point $\w\in\W$ is called a first-order stationary
point if $\|\nabla f(\w)\| = 0$. Moreover, if $\|\nabla f(\w)\| \leq \epsilon$, then the point $\w$ is said to be an $\epsilon$-stationary point, where $\epsilon\in(0,1)$ is a small positive value.
\end{definition}

\section{Convergence Analysis of SGD with LSR}\label{sec:LSR}
To understand LSR from the optimization perspective, we consider SGD with LSR in Algorithm~\ref{alg:lsr} for the sake of simplicity. The only difference between Algorithm~\ref{alg:lsr} and standard SGD is the use of the output label for constructing a stochastic gradient. The following theorem shows that Algorithm~\ref{alg:lsr} converges to an approximate stationary point in expectation under some conditions. We include its proof in Appendix~\ref{app:thm:lsr}.
\begin{algorithm}[t]
\caption{SGD with Label Smoothing Regularization}\label{alg:lsr}
\begin{algorithmic}[1]
\STATE \textbf{Initialize}:  $\w_0\in\W$,  $\theta\in(0,1)$, set $\eta$ as the value in Theorem~\ref{thm:lsr}.
\FOR{$t=0,1,\ldots,T-1$}
    \STATE sample $(\x_t,\y_t)$, set $\yT_t = (1 - \theta) \y_t + \theta \yh_t$
	\STATE update $\w_{t+1} = \w_t - \eta\nabla_\w \ell(\yT_t, f(\w_t;\x_t))$
\ENDFOR
\end{algorithmic}
\end{algorithm}
\begin{thm}\label{thm:lsr}
Under Assumption \ref{ass:2}, run Algorithm~\ref{alg:lsr} with $\eta=\frac{1}{L}$ and $\theta=\frac{1}{1+\delta}$, then 
$\E_R[\|\nabla F(\w_R)\|^2] \le  \frac{2F(\w_{0}) }{\eta T} + 2 \delta \sigma^2$,
where $R$ is uniformly sampled from $\{0, 1, \dots, T-1\}$.  Furthermore, we have the following two results. \\
(1) when $ \delta  \le \frac{\epsilon^2}{4 \sigma^2} $, if we set $T = \frac{4F(\w_{0}) }{\eta \epsilon^2} $, then Algorithm~\ref{alg:lsr} converges to an $\epsilon$-stationary point in expectation, i.e., $\E_R[\|\nabla F(\w_R)\|^2]\le\epsilon^2$. The total sample complexity is $T= O\left(\frac{1}{\epsilon^2}\right)$. \\
(2) when $ \delta  > \frac{\epsilon^2}{4 \sigma^2}$, if we set $T = \frac{F(\w_{0}) }{ \eta  \delta \sigma^2 }$, then Algorithm~\ref{alg:lsr} does not converge to an $\epsilon$-stationary point, but we have $\E_R[\|\nabla F(\w_R)\|^2]\le 4 \delta \sigma^2 \le O(\delta)$.
\end{thm}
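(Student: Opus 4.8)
The plan is to combine the $L$-smoothness descent inequality with a decomposition of the smoothed stochastic gradient that isolates its deviation from the true gradient $\nabla F(\w_t)$. The first observation I would exploit is that the cross-entropy loss (\ref{loss:CE}) is \emph{linear} in the label argument, so the smoothing (\ref{label:smooth}) passes through the gradient:
\[
\nabla_\w \ell(\yT_t, f(\w_t;\x_t)) = (1-\theta)\,\nabla_\w \ell(\y_t, f(\w_t;\x_t)) + \theta\,\nabla_\w \ell(\yh_t, f(\w_t;\x_t)).
\]
Writing $\g_t := \nabla_\w \ell(\y_t, f(\w_t;\x_t))$ and $\gh_t := \nabla_\w \ell(\yh_t, f(\w_t;\x_t))$, the update direction is the convex combination $(1-\theta)\g_t + \theta\gh_t$, and its error relative to the true gradient is the same convex combination of the individual errors, $(1-\theta)(\g_t - \nabla F(\w_t)) + \theta(\gh_t - \nabla F(\w_t))$. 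By Assumption~\ref{ass:2}(i) the first error has zero mean and second moment $\sigma^2$, while by (\ref{variance:output:sm:label}) the second has second moment $\delta\sigma^2$ around $\nabla F(\w_t)$.

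Next I would apply the descent form of $L$-smoothness (the equivalent inequality in the remark after Assumption~\ref{ass:2}) to $\w_{t+1} = \w_t - \eta\,[(1-\theta)\g_t + \theta\gh_t]$ and split the update direction as $\nabla F(\w_t)$ plus its error. Collecting terms, the coefficient of the first-order cross term $\langle \nabla F(\w_t),\, (1-\theta)\g_t + \theta\gh_t - \nabla F(\w_t)\rangle$ is exactly $-\eta + L\eta^2$, which \emph{vanishes} at the prescribed step size $\eta = 1/L$; simultaneously the coefficient of $\|\nabla F(\w_t)\|^2$ becomes $-\eta/2$ and that of the squared error becomes $\eta/2$. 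This is the crux of the argument and the step I expect to be the main obstacle: the smoothing gradient $\gh_t$ is \emph{biased} (Remark (a)), so a naive analysis would leave an uncontrolled inner product $\langle \nabla F(\w_t), \E[\gh_t] - \nabla F(\w_t)\rangle$; choosing $\eta = 1/L$ precisely cancels this first-order term, so only the \emph{second moment} of the error around $\nabla F(\w_t)$ --- exactly what (\ref{variance:output:sm:label}) controls --- ever enters the bound. Taking expectation (conditioning on $\w_t$ and invoking the tower rule) then yields
\[
\E[F(\w_{t+1})] \le \E[F(\w_t)] - \tfrac{\eta}{2}\E[\|\nabla F(\w_t)\|^2] + \tfrac{\eta}{2}\E\big[\|(1-\theta)\g_t + \theta\gh_t - \nabla F(\w_t)\|^2\big].
\]

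To bound the last term I would use convexity of $\|\cdot\|^2$ to dominate the convex combination of errors by $(1-\theta)\sigma^2 + \theta\,\delta\sigma^2$, thereby sidestepping any cross-covariance between $\g_t$ and $\gh_t$. The choice $\theta = \tfrac{1}{1+\delta}$ equalizes the two contributions --- each equals $\tfrac{\delta}{1+\delta}\sigma^2$ --- so their sum is $\tfrac{2\delta}{1+\delta}\sigma^2 \le 2\delta\sigma^2$. Substituting back, summing the per-step inequality over $t = 0,\dots,T-1$, telescoping $F$, and using $F_* \ge 0$ (cross-entropy is nonnegative) gives $\tfrac{\eta}{2}\sum_{t} \E[\|\nabla F(\w_t)\|^2] \le F(\w_0) + \tfrac{\eta T}{2}\cdot 2\delta\sigma^2$. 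Dividing by $\eta T/2$ and identifying $\tfrac1T\sum_t \E[\|\nabla F(\w_t)\|^2]$ with $\E_R[\|\nabla F(\w_R)\|^2]$ for $R$ uniform on $\{0,\dots,T-1\}$ produces the master bound $\E_R[\|\nabla F(\w_R)\|^2] \le \tfrac{2F(\w_0)}{\eta T} + 2\delta\sigma^2$.

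Finally, the two regimes follow by balancing the two terms of the master bound against $\epsilon^2$. When $\delta \le \epsilon^2/(4\sigma^2)$ the variance term satisfies $2\delta\sigma^2 \le \epsilon^2/2$, so setting $T = 4F(\w_0)/(\eta\epsilon^2)$ forces the optimization term $2F(\w_0)/(\eta T)$ down to $\epsilon^2/2$ as well, giving $\E_R[\|\nabla F(\w_R)\|^2] \le \epsilon^2$ with $T = O(1/\epsilon^2)$. When $\delta > \epsilon^2/(4\sigma^2)$ the irreducible floor $2\delta\sigma^2$ already exceeds $\epsilon^2/2$, so an $\epsilon$-stationary point is unreachable; choosing $T = F(\w_0)/(\eta\delta\sigma^2)$ matches the optimization term to the floor and yields $\E_R[\|\nabla F(\w_R)\|^2] \le 4\delta\sigma^2 = O(\delta)$. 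The only nonroutine verifications are the cancellation of the cross term at $\eta = 1/L$ and the nonnegativity $F_* \ge 0$ used to replace $F(\w_0) - F_*$ by $F(\w_0)$.
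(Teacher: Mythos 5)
Your proposal is correct and takes essentially the same route as the paper's proof: the smoothness descent inequality at $\eta=\tfrac{1}{L}$, linearity of the cross-entropy in the label to write the smoothed gradient as the convex combination $(1-\theta)\g_t+\theta\gh_t$, convexity of $\|\cdot\|^2$ to bound the error second moment by $(1-\theta)\sigma^2+\theta\delta\sigma^2=\tfrac{2\delta}{1+\delta}\sigma^2\le 2\delta\sigma^2$ at $\theta=\tfrac{1}{1+\delta}$ (the paper's Lemma~\ref{lem:var:1}), telescoping with nonnegativity of $F$, and the identical choices of $T$ in the two regimes. Your cancellation of the cross term $\langle\nabla F(\w_t), e_t\rangle$ via the coefficient $-\eta+L\eta^2=0$ at $\eta=\tfrac{1}{L}$ is an algebraically equivalent rewriting of the paper's use of the identity $\langle \a,-\b\rangle = \tfrac{1}{2}\left( \|\a-\b\|^2 - \|\a\|^2 - \|\b\|^2\right)$ followed by dropping the $\tfrac{\eta(\eta L-1)}{2}\|\nabla \ell(\yT_t, f(\w_t;\x_t))\|^2$ term, so no substantive difference remains.
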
 
{\bf Remark.} We observe that the variance term is $2 \delta \sigma^2$, instead of $\eta L\sigma^2 $ for standard analysis of SGD without LSR (i.e., $\theta=0$, please see the detailed analysis of Theorem~\ref{thm:0} in Appendix~\ref{supp:baseline}). 
For the convergence analysis, the different between SGD with LSR and SGD without LSR is that $\nabla \ell(\yh, f(\w;\x)) $ is not an unbiased estimator of $\nabla F(\w)$ when using LSR.
The convergence behavior of Algorithm~\ref{alg:lsr} heavily depends on the parameter $\delta$. When $\delta$ is small enough, say $\delta \le O(\epsilon^2)$ with a small positive value $\epsilon\in(0,1)$, then Algorithm~\ref{alg:lsr} converges to an $\epsilon$-stationary point with the total sample complexity of $O\left(\frac{1}{\epsilon^2}\right)$. Recall that the total sample complexity of standard SGD without LSR for finding an $\epsilon$-stationary point is $O\left( \frac{1}{\epsilon^4}\right)$ (\citep{DBLP:journals/mp/GhadimiL16,ghadimi2016mini}, please also see the detailed analysis of Theorem~\ref{thm:0} in Appendix~\ref{supp:baseline}). 
The convergence result shows that if we could find a label $\yh$ that has a reasonably small amount of $\delta$, we will be able to reduce sample complexity for training a learning model from $O\left( \frac{1}{\epsilon^4}\right)$ to $O\left( \frac{1}{\epsilon^2}\right)$. Thus, the reduction in variance will happen when an appropriate label smoothing with $\delta \in(0,1)$ is introduced. We will find in the empirical evaluations that different label $\yh$ lead to different performances and an appropriate selection of label $\yh$ has a better performance (see the performances of LSR and LSR-pre in Table~\ref{table2}).
On the other hand, when the parameter $\delta$ is large such that $\delta > \Omega(\epsilon^2)$, that is to say, if an inappropriate label smoothing is used, then Algorithm~\ref{alg:lsr} does not converge to an $\epsilon$-stationary point, but it converges to a worse level of $O(\delta)$. 

\section{TSLA: A Generic Two-Stage Label Smoothing Algorithm}\label{sec:TSLA}
Despite superior outcomes in training  deep neural networks, some real applications have shown the adverse effect of LSR. \cite{muller2019does}~have empirically observed that LSR impairs distillation, that is, after training teacher models with LSR, student models perform worse. The authors believed that LSR reduces mutual information between input example and output logit. \cite{kornblith2019better} have found that LSR impairs the accuracy of transfer learning when training deep neural network models on ImageNet data set. \cite{seo2020self} trained deep neural network models for few-shot learning on miniImageNet and found a significant performance drop with LSR. This motivates us to investigate a strategy that combines the algorithm with and without LSR during the training progress. 
Let think in this way, one possible scenario is that training one-hot label is ``easier" than training smoothed label. Taking the cross entropy loss in (\ref{loss:CE}) for an example, one need to optimize a single loss function $-\log\left(\exp(f_k(\w;\x))/\sum_{j=1}^{K}\exp(f_j(\w;\x))\right)$ when one-hot label (e.g, $y_k=1$ and $y_i=0$ for all $i\ne k$) is used, but need to optimize all $K$ loss functions $-\sum_{i=1}^{K} \yT_i\log\left(\exp(f_i(\w;\x))/\sum_{j=1}^{K}\exp(f_j(\w;\x))\right)$ when smoothed label (e.g., $\yT = (1-\theta) \y+\theta\frac{{\bf 1}}{K}$ so that $y^{\text{LS}}_k=1 - (K-1)\theta/K$ and $y^{\text{LS}}_i=\theta/K$ for all $i\ne k$) is used. Nevertheless, training deep neural networks is gradually focusing on hard examples with the increase of training epochs. It seems that training smoothed label in the late epochs makes the learning progress more difficult. 
In addition, after LSR, we focus on optimizing the overall distribution that contains the minor classes, which are probably not important at the end of training progress.
One question is whether LSR helps at the early training epochs but it has less (even negative) effect during the later training epochs? This question encourages us to propose and analyze a simple strategy with LSR dropping that switches a stochastic algorithm with LSR to the algorithm without LSR. 

\begin{algorithm}[t]
\caption{The TSLA algorithm}\label{alg:tsla}
\begin{algorithmic}[1]
\STATE \textbf{Initialize}:  $\w_0\in\W$, $\theta\in(0,1)$, $\eta_1, \eta_2>0$
\STATE  \textbf{Input: } stochastic algorithm $\mathcal A$ (e.g., SGD)\\
{\color{blue}// First stage: $\mathcal A$ with LSR}
\FOR{$t=0,1,\ldots,T_1-1$}
	\STATE sample $(\x_t,\y_t)$, set $\yT_t = (1 - \theta) \y_t + \theta \yh_t$
	\STATE update $\w_{t+1} = \mathcal A$-step$(\w_t; \x_t, \yT_t, \eta_1)$ \hfill{$\diamond$  one update step of $\mathcal A$}
\ENDFOR\\
{\color{blue}// Second stage: $\mathcal A$ without LSR}
\FOR{$t=T_1,1,\ldots,T_1+T_2-1$}
    \STATE sample $(\x_t,\y_t)$
	\STATE update $\w_{t+1} = \mathcal A$-step$(\w_t; \x_t, \y_t, \eta_2)$ \hfill{$\diamond$  one update step of $\mathcal A$} 
\ENDFOR
\end{algorithmic}
\end{algorithm}

In this subsection, we propose a generic framework that consists of two stages, wherein the first stage it runs a stochastic algorithm $\mathcal A$ (e.g., SGD) with LSR in $T_1$ iterations and the second stage it runs the same algorithm without LSR up to $T_2$ iterations. This framework is referred to as {\bf T}wo-{\bf S}tage {\bf LA}bel smoothing (TSLA) algorithm, whose updating details are presented in Algorithm~\ref{alg:tsla}. The notation $\mathcal A$-step$(\cdot;\cdot,\eta)$ is one update step of a stochastic algorithm $\mathcal A$ with learning rate $\eta$. For example, if we select SGD as algorithm~$\mathcal A$, then
\begin{align}\label{sdg:step}
&\text{SGD-step}(\w_t; \x_t, \yT_t, \eta_1) = \w_t - \eta_1  \nabla \ell(\yT_t, f(\w_t;\x_t)),\\
&\text{SGD-step}(\w_t; \x_t, \y_t, \eta_2) = \w_t - \eta_2 \nabla \ell(\y_t, f(\w_t;\x_t)).
\end{align}
The proposed TSLA is a generic strategy where the subroutine algorithm $\mathcal A$ can be replaced by any stochastic algorithms such as momentum SGD~\citep{polyak1964some}, Stochastic Nesterov's Accelerated Gradient~\citep{citeulike9501961}, and adaptive algorithms including {\sc AdaGrad}~\citep{duchi2011adaptive}, RMSProp~\citep{hinton2012neural}, AdaDelta~\citep{zeiler2012adadelta}, Adam~\citep{kingma2015adam}, Nadam~\citep{dozat2016incorporating} and {\sc AMSGrad}~\citep{reddi2018convergence}. Please note that the algorithm can use different learning rates $\eta_1$ and $\eta_2$ during the two stages. The last solution of the first stage will be used as the initial solution of the second stage. If $T_1 = 0$, then TSLA reduces to the baseline, i.e., a standard stochastic algorithm $\mathcal A$ without LSR; while if $T_2 = 0$, TSLA becomes to LSR method, i.e., a standard stochastic algorithm $\mathcal A$ with LSR.

\subsection{Convergence Result of TSLA}
In this subsection, we will give the convergence result of the proposed TSLA algorithm. For simplicity, we use SGD as the subroutine algorithm $\mathcal A$ in the analysis. The convergence result in the following theorem shows the power of LSR from the optimization perspective. Its proof is presented in Appendix~\ref{app:thm:drop}. It is easy to see from the proof that by using the last output of the first stage as the initial point of the second stage, TSLA can enjoy the advantage of LSR in the second stage with an improved convergence.
\begin{thm}\label{thm:drop}
Under Assumptions \ref{ass:2}, \ref{ass:3}, suppose $\sigma^2 \delta/\mu  \le  F(\w_0)$, run Algorithm~\ref{alg:tsla} with $\mathcal A$ = SGD, $\theta=\frac{1}{1+\delta}$, $\eta_1 = \frac{1}{L}$, $T_1=\log\left( \frac{2\mu F(\w_0)(1+\delta)}{2\delta\sigma^2  }\right)/(\eta_1\mu)$, $\eta_2 = \frac{\epsilon^2}{2L\sigma^2}$ and $T_2 = \frac{8 \delta \sigma^2 }{\mu \eta_2\epsilon^2}$, 
then $\E_R[\|\nabla F(\w_R)\|^2]\le\epsilon^2$, where $R$ is uniformly sampled from $\{T_1, \dots, T_1+T_2-1\}$.
\end{thm}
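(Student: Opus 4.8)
The plan is to analyze the two stages separately, using the Polyak--\L ojasiewicz condition (Assumption~\ref{ass:3}) to force a \emph{linear} contraction of the optimality gap in the first stage, and then to feed the resulting small gap into a standard non-convex SGD analysis in the second stage. For the first stage, I would first note that since the cross-entropy loss~(\ref{loss:CE}) is linear in the label, the stochastic gradient decomposes as $\g_t := \nabla\ell(\yT_t, f(\w_t;\x_t)) = (1-\theta)\nabla\ell(\y_t, f(\w_t;\x_t)) + \theta\nabla\ell(\yh_t, f(\w_t;\x_t))$. Applying the $L$-smoothness inequality to $\w_{t+1}=\w_t-\eta_1\g_t$ and taking the conditional expectation $\E_t$, the choice $\eta_1=1/L$ collapses the quadratic term and gives the clean one-step bound
\[
\E_t[F(\w_{t+1})] \le F(\w_t) - \frac{\eta_1}{2}\|\nabla F(\w_t)\|^2 + \frac{\eta_1}{2}\,\E_t\big[\|\g_t - \nabla F(\w_t)\|^2\big],
\]
where the last term is the \emph{mean-squared error} of $\g_t$, combining bias and variance. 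I would bound it by convexity of $\|\cdot\|^2$, $\|\g_t-\nabla F\|^2 \le (1-\theta)\|\nabla\ell(\y_t,f)-\nabla F\|^2 + \theta\|\nabla\ell(\yh_t,f)-\nabla F\|^2$, and take expectations to obtain $(1-\theta)\sigma^2+\theta\delta\sigma^2$ via Assumption~\ref{ass:2}(i) and~(\ref{variance:output:sm:label}); this step requires no independence between $\y_t$ and $\yh_t$ and automatically absorbs the bias noted in the remark after~(\ref{variance:output:sm:label}). With $\theta=1/(1+\delta)$ the bound equals $2\delta\sigma^2/(1+\delta)$.

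Next I would invoke the PL inequality $\|\nabla F(\w_t)\|^2\ge 2\mu(F(\w_t)-F_*)$ to turn the one-step estimate into the linear recursion $a_{t+1}\le(1-\eta_1\mu)a_t + \eta_1\delta\sigma^2/(1+\delta)$ for $a_t:=\E[F(\w_t)-F_*]$. Unrolling and summing the geometric series yields
\[
a_{T_1} \le (1-\eta_1\mu)^{T_1}\,a_0 + \frac{\delta\sigma^2}{\mu(1+\delta)}.
\]
Since the prescribed $T_1$ is chosen so that $e^{-\eta_1\mu T_1}F(\w_0)$ equals the floor term $\delta\sigma^2/(\mu(1+\delta))$, bounding $(1-\eta_1\mu)^{T_1}\le e^{-\eta_1\mu T_1}$ and using $a_0\le F(\w_0)$ gives the first-stage guarantee $\E[F(\w_{T_1})-F_*]\le 2\delta\sigma^2/(\mu(1+\delta))$. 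The hypothesis $\sigma^2\delta/\mu\le F(\w_0)$ is exactly what makes the logarithm defining $T_1$ nonnegative.

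For the second stage I start from $\w_{T_1}$ and run plain SGD with unbiased gradients of variance $\sigma^2$, for which the standard descent reads
\[
\E_t[F(\w_{t+1})] \le F(\w_t) - \eta_2\Big(1-\frac{L\eta_2}{2}\Big)\|\nabla F(\w_t)\|^2 + \frac{L\eta_2^2\sigma^2}{2}.
\]
Telescoping over the $T_2$ steps, dropping the nonnegative terminal gap, and dividing by $\eta_2(1-\tfrac{L\eta_2}{2})T_2$ bounds $\E_R[\|\nabla F(\w_R)\|^2]$ by a ``gap'' term of order $\E[F(\w_{T_1})-F_*]/(\eta_2 T_2)$ plus a ``noise'' term of order $L\eta_2\sigma^2$. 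Substituting $\eta_2=\epsilon^2/(2L\sigma^2)$ makes $L\eta_2/2\le 1/2$ and drives the noise term to at most $\epsilon^2/2$; plugging $\eta_2 T_2 = 8\delta\sigma^2/(\mu\epsilon^2)$ together with the first-stage bound drives the gap term to at most $\epsilon^2/(2(1+\delta))\le\epsilon^2/2$. Summing the two halves gives $\E_R[\|\nabla F(\w_R)\|^2]\le\epsilon^2$.

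The step I expect to require the most care is the first-stage gradient-error estimate: because $\nabla\ell(\yh_t,f)$ is biased, one cannot separate bias from variance in the usual way, so the argument hinges on carrying the full mean-squared error $\E_t[\|\g_t-\nabla F\|^2]$ through the descent, which is only clean because $\eta_1=1/L$ cancels the $\|\E_t\g_t\|^2$ contribution. The second delicate point is checking that the prescribed horizons are mutually consistent --- that $T_1$ reduces the geometric term to the same order as the variance floor $\delta\sigma^2/\mu$, and that $T_2$ is just large enough to clean up that residual gap to $O(\epsilon^2)$ --- which is precisely where the factor $\delta$ resurfaces and yields the promised improvement over vanilla SGD.
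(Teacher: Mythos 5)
Your proposal is correct and follows essentially the same route as the paper's proof: the same decomposition of the smoothed-label gradient via linearity of the loss in the label, the same mean-squared-error bound $(1-\theta)\sigma^2+\theta\delta\sigma^2$ by convexity of $\|\cdot\|^2$ (the paper's Lemma~\ref{lem:var:1}), the same PL-driven linear recursion with geometric-series floor in stage one, and the same telescoped SGD bound in stage two with identical parameter choices. The only (immaterial) differences are that you keep the $(1+\delta)$ factor and the $\bigl(1-\tfrac{L\eta_2}{2}\bigr)$ coefficient explicit where the paper loosens them via $\tfrac{1}{1+\delta}\le 1$ and $\eta_2\le\tfrac{1}{L}$.
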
 
{\bf Remark.} 
\begin{table}[t]
\caption{Comparisons of Total Sample Complexity}\label{table0}
\centering
  \def\sym#1{\ifmmode^{#1}\else\(^{#1}\)\fi}
  \begin{tabular}{l*{3}{l}}
\hline
   Condition on $\delta$ &  TSLA & LSR & baseline  \\
\hline
    $\Omega(\epsilon^2)<\delta  $  &  $\frac{\delta}{\epsilon^4}$ & $\infty$    &  $\frac{1}{\epsilon^4}$  \\
\hline
    $\delta = O(\epsilon^2)$  &  $\frac{1}{\epsilon^2}$ & $\frac{1}{\epsilon^2}$    &  $\frac{1}{\epsilon^4}$  \\
\hline
    $\Omega(\epsilon^4)<\delta < O(\epsilon^2)$  &  $\frac{1}{\epsilon^{2-\theta}}^*$ & $\frac{1}{\epsilon^2}$    &  $\frac{1}{\epsilon^4}$  \\
\hline
    $\Omega(\epsilon^{4+c})\le \delta \le O(\epsilon^4)^{**}$  &  $
\log\left(\frac{1}{\epsilon}\right)$ & $\frac{1}{\epsilon^2}$    &  $\frac{1}{\epsilon^4}$  \\
\hline
{\small $^*\theta\in(0,2)$; $^{**} c \ge 0$ is a constant}
  \end{tabular}
\end{table} 
It is obvious that the learning rate $\eta_2$ in the second stage is roughly smaller than the learning rate $\eta_1$ in the first stage, which matches the widely used stage-wise learning rate decay scheme in training neural networks. To explore the total sample complexity of TSLA, we consider different conditions on $\delta$. We summarize the total sample complexities of finding $\epsilon$-stationary points for SGD with TSLA (TSLA), SGD with LSR (LSR), and SGD without LSR (baseline) in Table~\ref{table0}, where $\epsilon\in (0,1)$ is the target convergence level, and we only present the orders of the complexities but ignore all constants. When $\Omega(\epsilon^2)<\delta <1$, LSR dose not converge to an $\epsilon$-stationary point (denoted by $\infty$), while TSLA reduces sample complexity from $O\left( \frac{1}{\epsilon^4}\right)$ to $O\left( \frac{\delta}{\epsilon^4}\right)$, compared to the baseline. When $\delta < O(\epsilon^2)$, the total complexity of TSLA is between $\log(1/\epsilon)$ and $1/\epsilon^2$, which is always better than LSR and the baseline. In summary, TSLA achieves the best total sample complexity by enjoying the good property of an appropriate label smoothing (i.e., when $0<\delta<1$). However, when $\delta \ge 1$, baseline has better convergence than TSLA, meaning that the selection of label $\yh$ is not appropriate.

\section{Experiments}
To further evaluate the performance of the proposed TSLA method, we trained deep neural networks on three benchmark data sets, CIFAR-100~\citep{krizhevsky2009learning}, Stanford Dogs~\citep{khosla2011novel} and CUB-2011~\citep{WahCUB_200_2011}, for image classification tasks. CIFAR-100~\footnote{\url{https://www.cs.toronto.edu/~kriz/cifar.html}} has 50,000 training images and 10,000 testing images of 32$\times$32 resolution with 100 classes.  Stanford Dogs data set~\footnote{\url{http://vision.stanford.edu/aditya86/ImageNetDogs/}} contains 20,580 images of 120 breeds of dogs, where 100 images from each breed is used for training. CUB-2011~\footnote{\url{http://www.vision.caltech.edu/visipedia/CUB-200.html}} is a birds image data set with 11,788 images of 200 birds species. The ResNet-18 model~\citep{he2016deep} is applied as the backbone in the experiments. We compare the proposed TSLA incorporating with SGD (TSLA) with two baselines, SGD with LSR (LSR) and SGD without LSR (baseline). The mini-batch size of training instances for all methods is $256$ as suggested by~\cite{he2019bag} and~\cite{he2016deep}. The momentum parameter is fixed as 0.9. 
\begin{table}[t]
\caption{Comparison of Testing Accuracy for Different Methods (mean $\pm$ standard deviation, in $\%$).}\label{table1}
\begin{center}
  \def\sym#1{\ifmmode^{#1}\else\(^{#1}\)\fi}
  \begin{tabular}{l*{4}{l}}
    \hline
   &  \multicolumn{2}{c}{Stanford Dogs} & \multicolumn{2}{c}{CUB-2011}  \\
     \cline{2-3}\cline{4-5} 
  Algorithm$^*$  & \multicolumn{1}{l}{Top-1 accuracy } & \multicolumn{1}{l}{Top-5 accuracy}
     & \multicolumn{1}{l}{Top-1 accuracy } & \multicolumn{1}{l}{Top-5 accuracy}  \\
    \hline
    baseline& 82.31 $\pm$ 0.18  &   97.76 $\pm$ 0.06 & 75.31 $\pm$ 0.25    &   93.14 $\pm$ 0.31    \\
    \hline
    LSR &    82.80 $\pm$ 0.07  &   97.41 $\pm$ 0.09  &  76.97 $\pm$ 0.19 &  92.73 $\pm$ 0.12     \\
    \hline
     TSLA(20)   &  83.15 $\pm$ 0.02  &  97.91 $\pm$ 0.08  &  76.62 $\pm$ 0.15  &  93.60 $\pm$ 0.18   \\
    TSLA(30)  &    83.89 $\pm$ 0.16  &  98.05 $\pm$ 0.08 &  77.44 $\pm$ 0.19  &  93.92 $\pm$ 0.16 \\
    TSLA(40)   &    {\bf 83.93} $\pm$ 0.13  &  98.03 $\pm$ 0.05   &  77.50 $\pm$ 0.20  &  93.99 $\pm$ 0.11  \\
    TSLA(50)   &    83.91 $\pm$ 0.15  &  {\bf 98.07} $\pm$ 0.06   &  {\bf 77.57} $\pm$ 0.21  &  93.86 $\pm$ 0.14   \\
    TSLA(60)   &  83.51 $\pm$ 0.11  &  97.99 $\pm$ 0.06   &  77.25 $\pm$ 0.29   &  {\bf 94.43} $\pm$ 0.18        \\
    TSLA(70)  &    83.38 $\pm$ 0.09  &  97.90 $\pm$ 0.09 &  77.21 $\pm$  0.15  &  93.31 $\pm$ 0.12   \\
    TSLA(80)   &    83.14 $\pm$ 0.09  &  97.73 $\pm$ 0.07   &  77.05 $\pm$  0.14   &  93.05 $\pm$  0.08       \\
    \hline
  \end{tabular}
  \end{center}
    $^*${\small TSLA($s$): TSLA drops off LSR after epoch $s$.}
\end{table}
\begin{figure}
    \centering
    \includegraphics[width=0.34\textwidth]{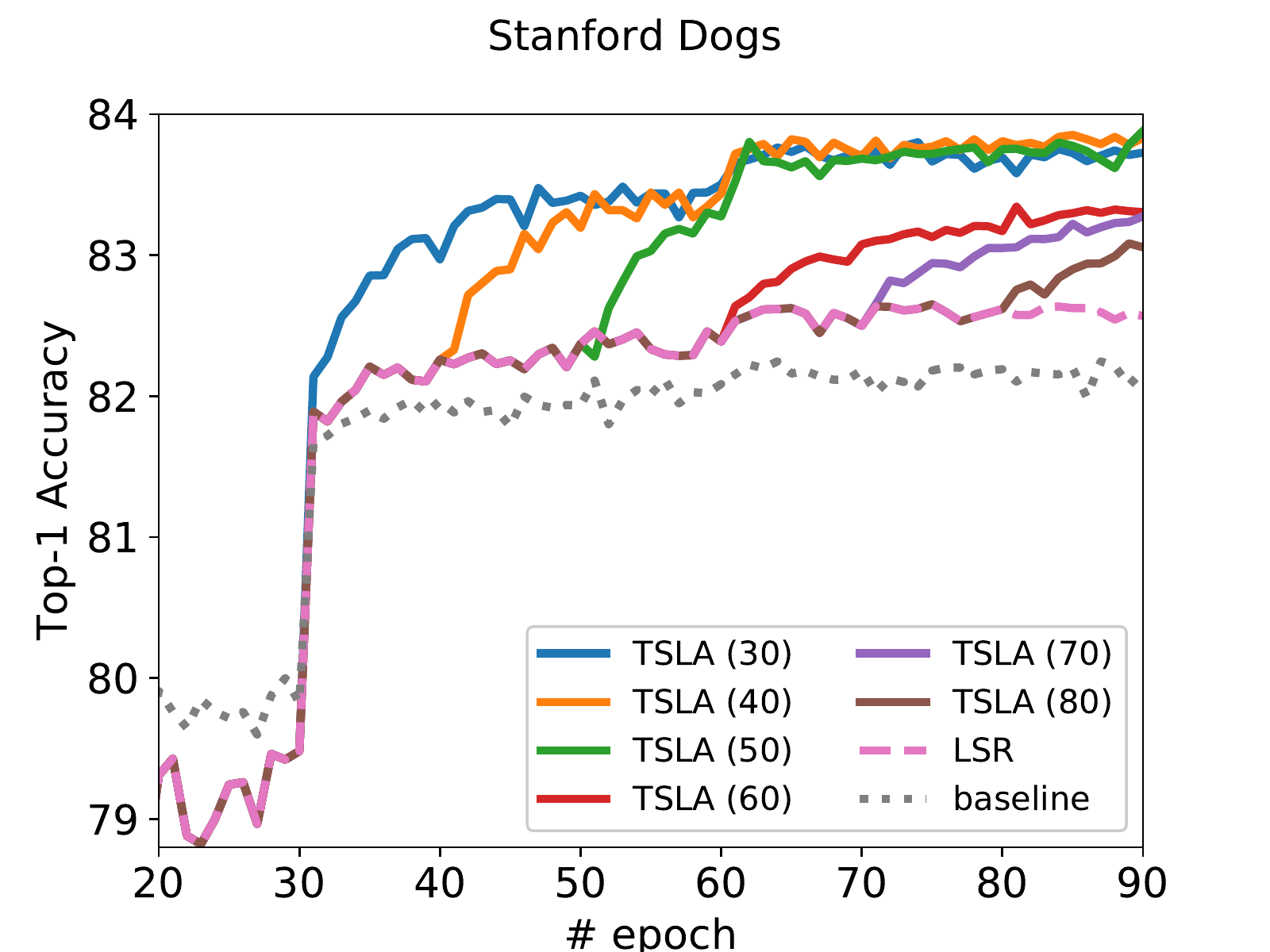}\hspace{-0.15in}
    \includegraphics[width=0.34\textwidth]{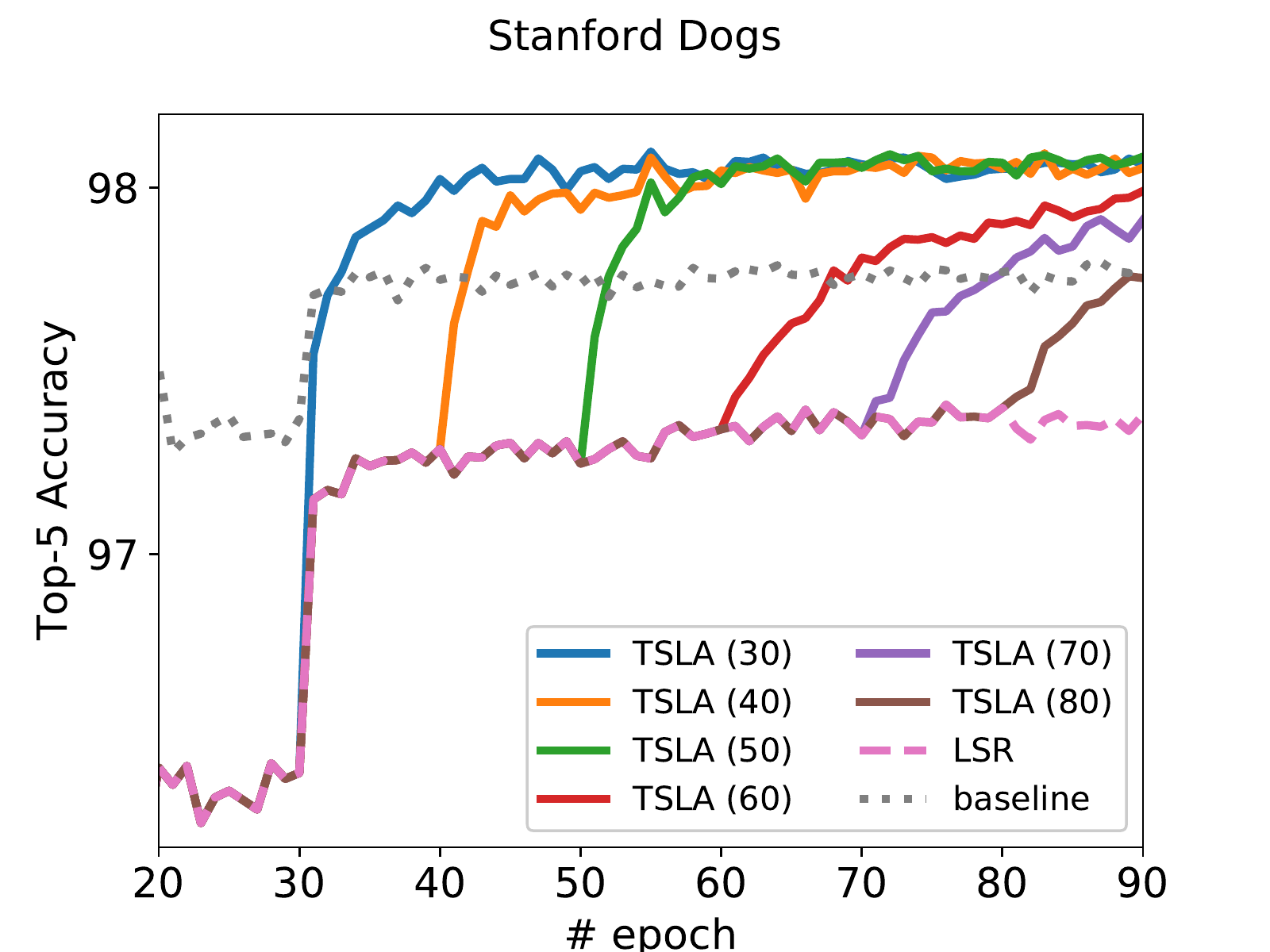}\hspace{-0.15in}
   \includegraphics[width=0.34\textwidth]{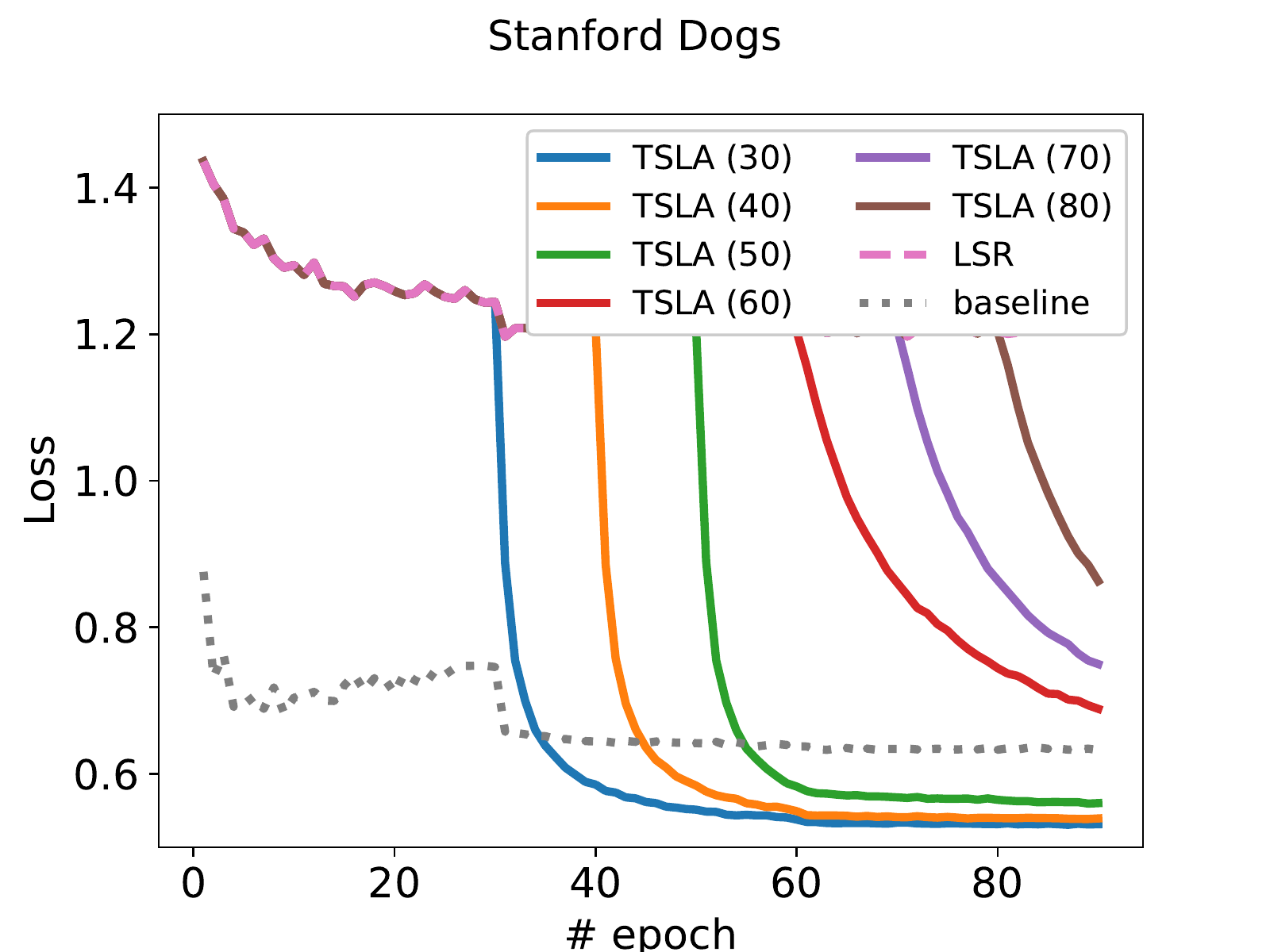}
        \includegraphics[width=0.34\textwidth]{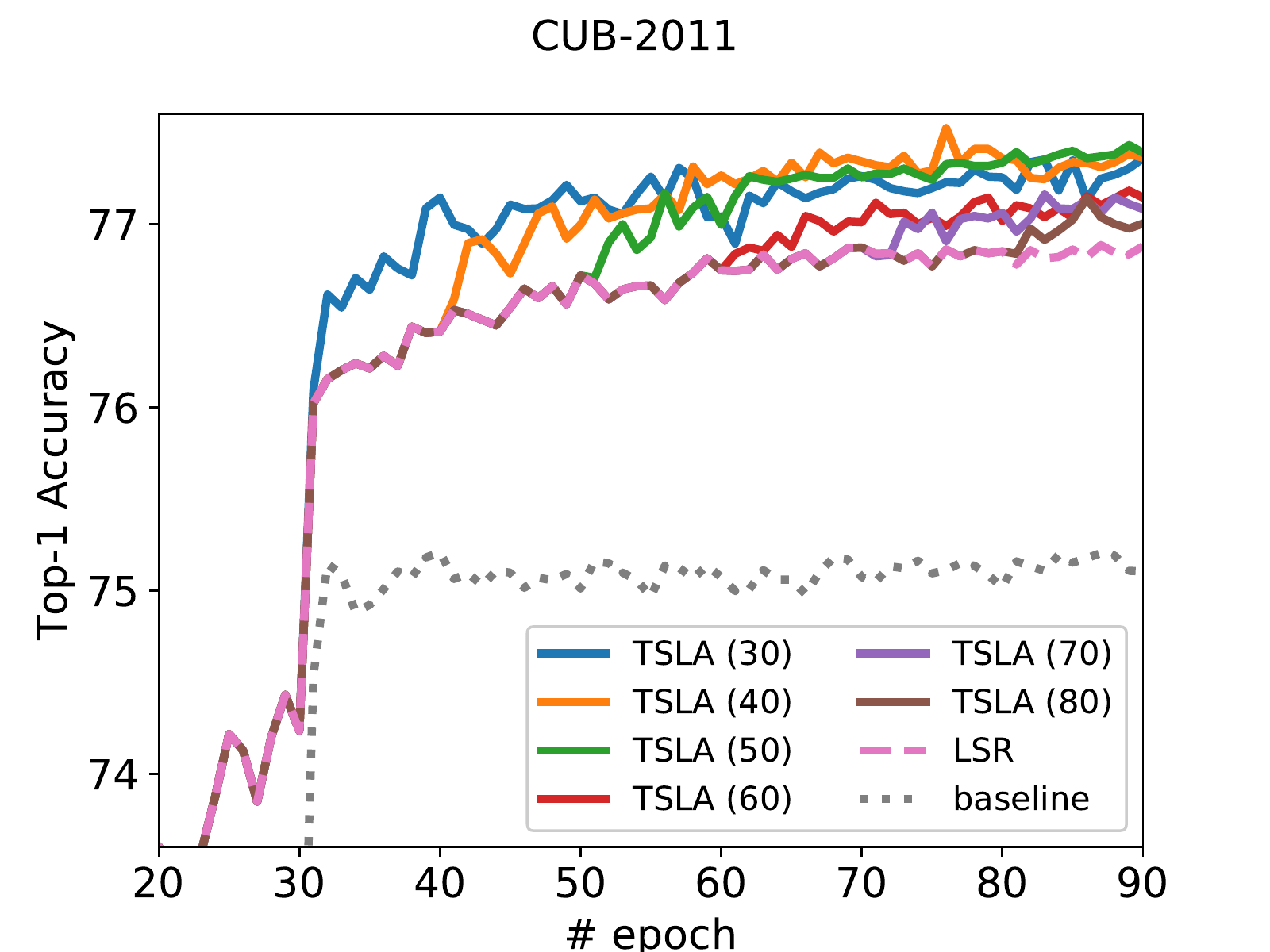}\hspace{-0.15in}
            \includegraphics[width=0.34\textwidth]{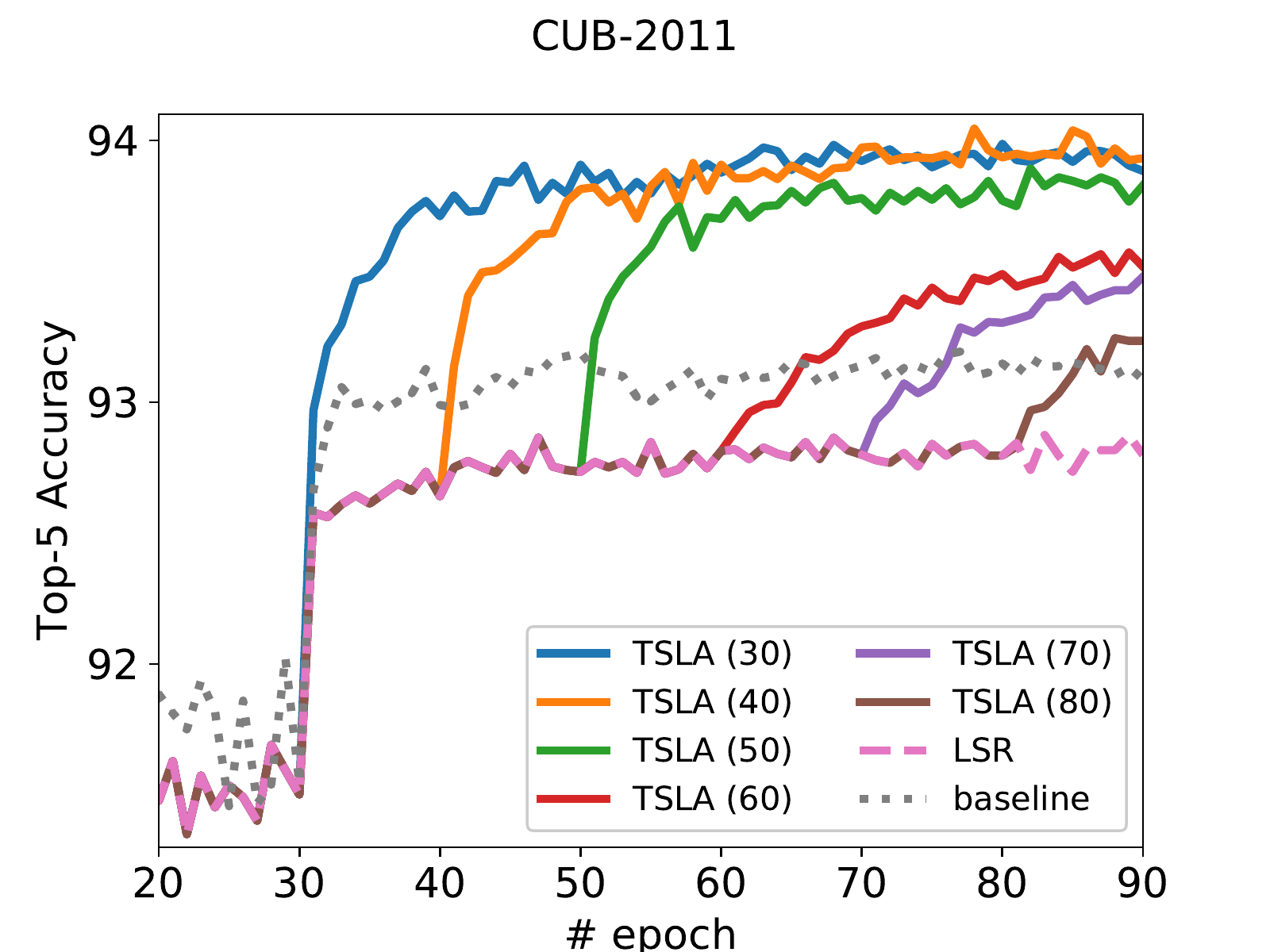}\hspace{-0.15in}
 \includegraphics[width=0.34\textwidth]{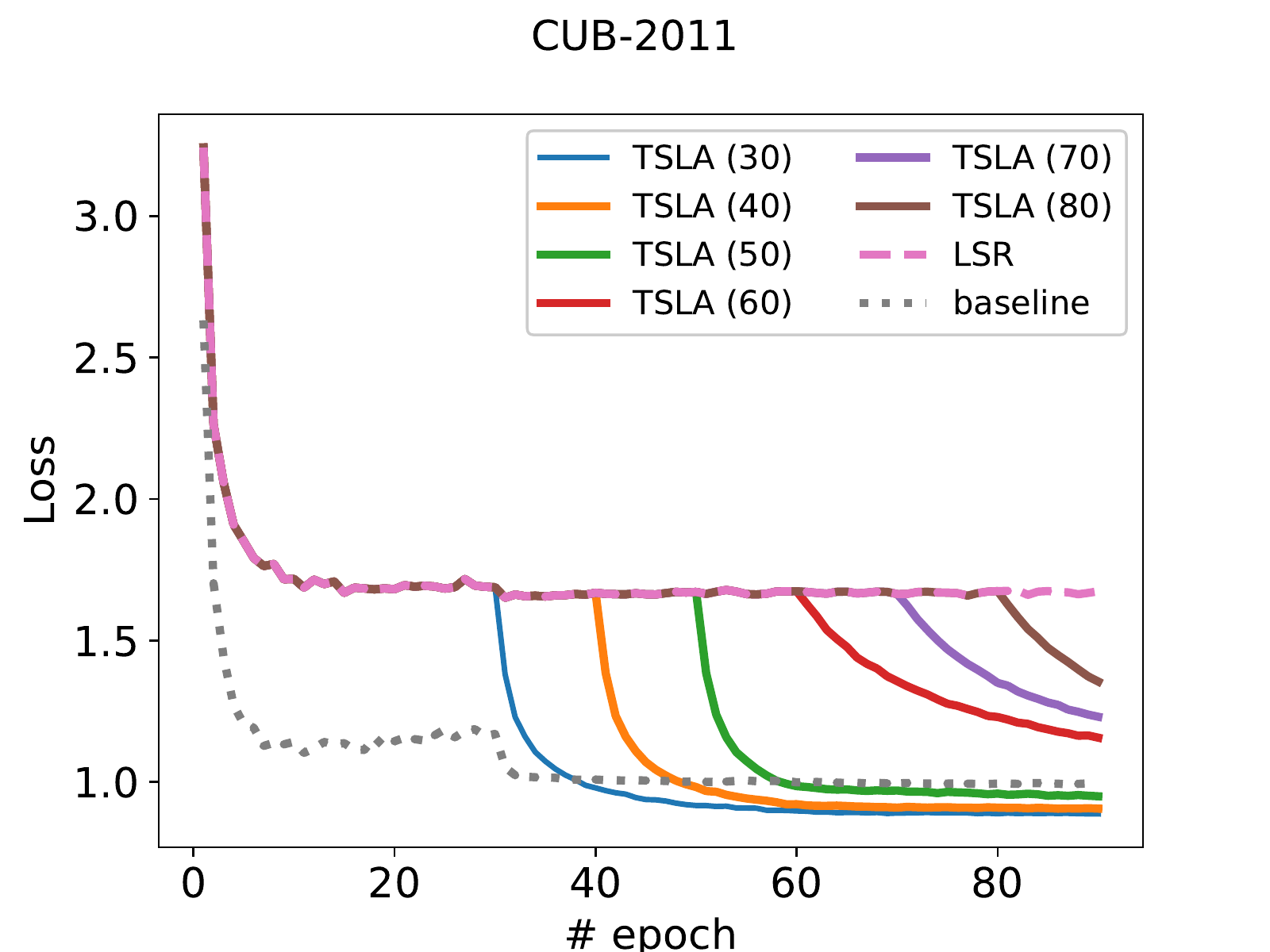}
    \caption{Testing Top-1, Top-5 Accuracy and Loss on ResNet-18 over Stanford Dogs and CUB-2011. TSLA($s$) means TSLA drops off LSR after epoch $s$.}
     \label{fig1}
\end{figure}
\subsection{Stanford Dogs and CUB-2011}
We separately train ResNet-18~\citep{he2016deep} up to 90 epochs over two data sets Stanford Dogs and CUB-2011. We use weight decay with the parameter value of $10^{-4}$. For all algorithms, the initial learning rates for FC are set to be 0.1, while that for the pre-trained backbones are 0.001 and 0.01 for Standford Dogs and CUB-2011, respectively. The learning rates are divided by 10 every 30 epochs. For LSR, we fix the value of smoothing strength $\theta=0.4$ for the best performance, and the 
label $\yh$ used for label smoothing is set to be a uniform distribution over all $K$ classes, i.e., $\yh=\frac{\bf{1}}{K}$. The same values of the smoothing strength $\theta$ and the same $\yh$ are used during the first stage of TSLA. For TSLA, we drop off the LSR (i.e., let $\theta=0$) after $s$ epochs during the training process, where $s \in\{20, 30, 40, 50, 60, 70, 80\}$. We first report the highest top-1 and top-5 accuracy on the testing data sets for different methods. All top-1 and top-5 accuracy are averaged over 5 independent random trails with their standard deviations. The results of the comparison are summarized in Table~\ref{table1}, where the notation ``TSLA($s$)" means that the TSLA algorithm drops off LSR after epoch $s$. It can be seen from Table~\ref{table1} that under an appropriate hyperparameter setting the models trained using TSLA outperform that trained using LSR and baseline, which supports the convergence result in Section~\ref{sec:TSLA}. We notice that the best top-1 accuracy of TSLA are TSLA(40) and TSLA(50) for Stanford Dogs and CUB-2011, respectively, meaning that the performance of TSLA($s$) is not monotonic over the dropping epoch $s$. For CUB-2011, the top-1 accuracy of TSLA(20) is smaller than that of LSR. This result matches the convergence analysis of TSLA showing that it can not drop off LSR too early. For top-5 accuracy, we found that TSLA(80) is slightly worse than baseline. This is because of dropping LSR too late so that the update iterations (i.e., $T_2$) in the second stage of TSLA is too small to converge to a good solution. We also observe that LSR is better than baseline regarding top-1 accuracy but the result is opposite as to top-5 accuracy. We then plot the averaged top-1 accuracy, averaged top-5 accuracy, and averaged loss among 5 trails of different methods in Figure~\ref{fig1}. We remove the results for TSLA(20) since it dropped off LSR too early as mentioned before. The figure shows TSLA improves the top-1 and top-5 testing accuracy immediately once it drops off LSR. Although TSLA may not converges if it drops off LSR too late, see TSLA(60), TSLA(70), and TSLA(80) from the third column of Figure~\ref{fig1}, it still has the best performance compared to LSR and baseline. TSLA(30), TSLA(40), and TSLA(50) can converge to lower objective levels, comparing to LSR and baseline.
\subsection{CIFAR-100}
\begin{table}[t]
\caption{Comparison of Testing Accuracy for Different Methods (mean $\pm$ standard deviation, in $\%$).}\label{table2}
\begin{center}
  \def\sym#1{\ifmmode^{#1}\else\(^{#1}\)\fi}
  \begin{tabular}{l*{4}{l}}
    \hline
    & \multicolumn{2}{c}{CIFAR-100}    \\
     \cline{2-3}
    Algorithm$^*$ & \multicolumn{1}{l}{Top-1 accuracy } & \multicolumn{1}{l}{Top-5 accuracy} \\
    \hline
    baseline&  76.87 $\pm$ 0.04    &   93.47 $\pm$ 0.15     \\
    \hline
    LSR&  77.77 $\pm$ 0.18 &  93.55 $\pm$ 0.11     \\
    \hline
    TSLA(120)   &  77.92 $\pm$ 0.21  &  94.13 $\pm$ 0.23       \\
    TSLA(140) &  77.93 $\pm$  0.19  &  94.11 $\pm$ 0.22   \\
    TSLA(160)   &  77.96 $\pm$  0.20   &  94.19 $\pm$  0.21       \\
    TSLA(180)  & 78.04 $\pm$ 0.27   &  94.23 $\pm$ 0.15   \\
    \hline
    LSR-pre &  78.07 $\pm$ 0.31 &  94.70 $\pm$ 0.14     \\
    \hline
    TSLA-pre(120)   &  78.34 $\pm$ 0.31  &  94.68 $\pm$ 0.14       \\
    TSLA-pre(140)  &  78.39 $\pm$  0.25  &  94.73 $\pm$ 0.11   \\
    TSLA-pre(160)   &  {\bf 78.55} $\pm$  0.28   &  94.83 $\pm$  0.08       \\
    TSLA-pre(180)  & 78.53 $\pm$ 0.23   &  {\bf 94.96} $\pm$ 0.23   \\
    \hline
  \end{tabular}
    \end{center}
    $^*${\small TSLA($s$)/TSLA-pre($s$): TSLA/TSLA-pre drops off LSR/LSR-pre after epoch $s$.}
\end{table}
\begin{figure}
    \centering
           \includegraphics[width=0.34\textwidth]{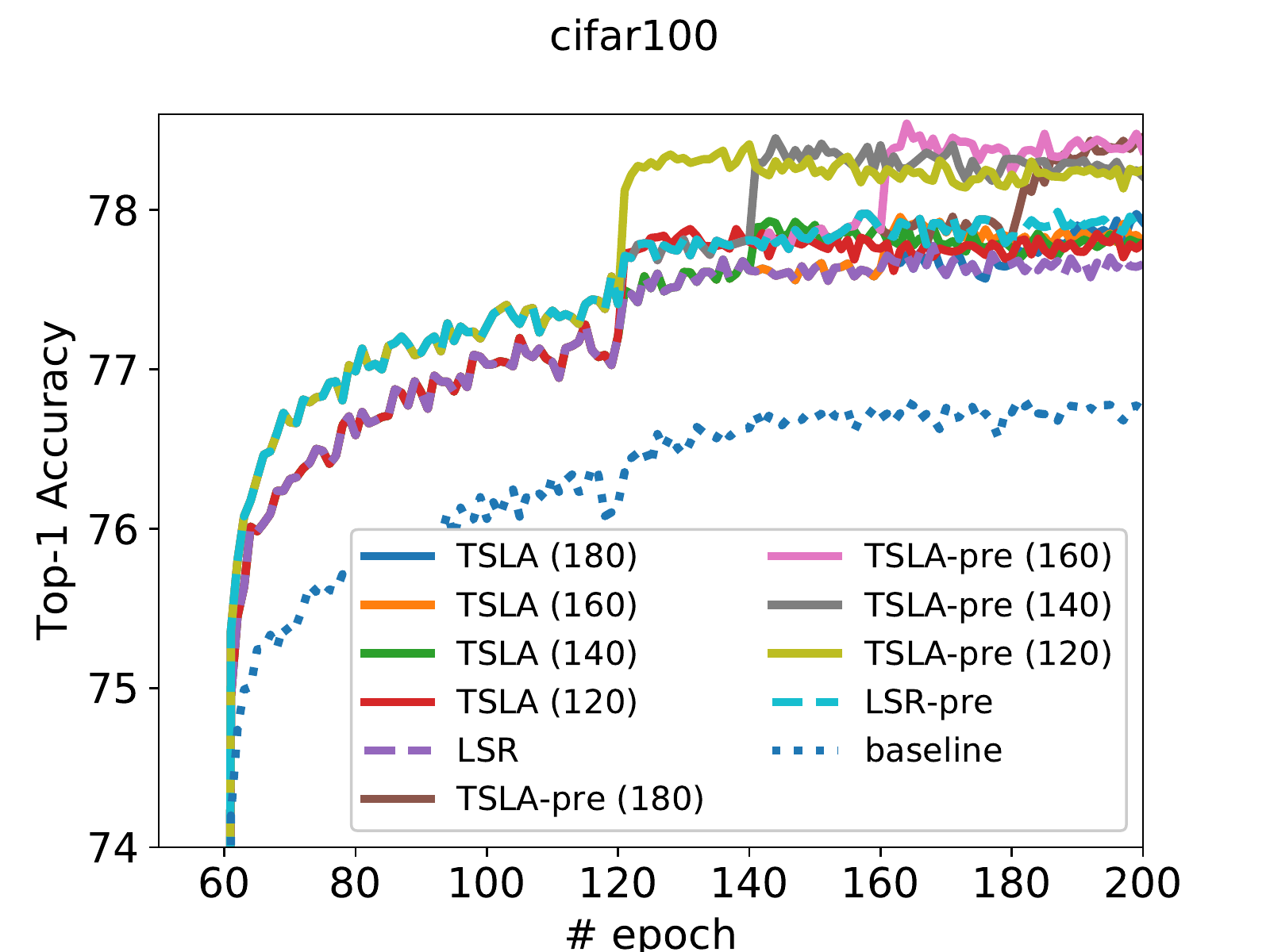}\hspace{-0.15in}
      \includegraphics[width=0.34\textwidth]{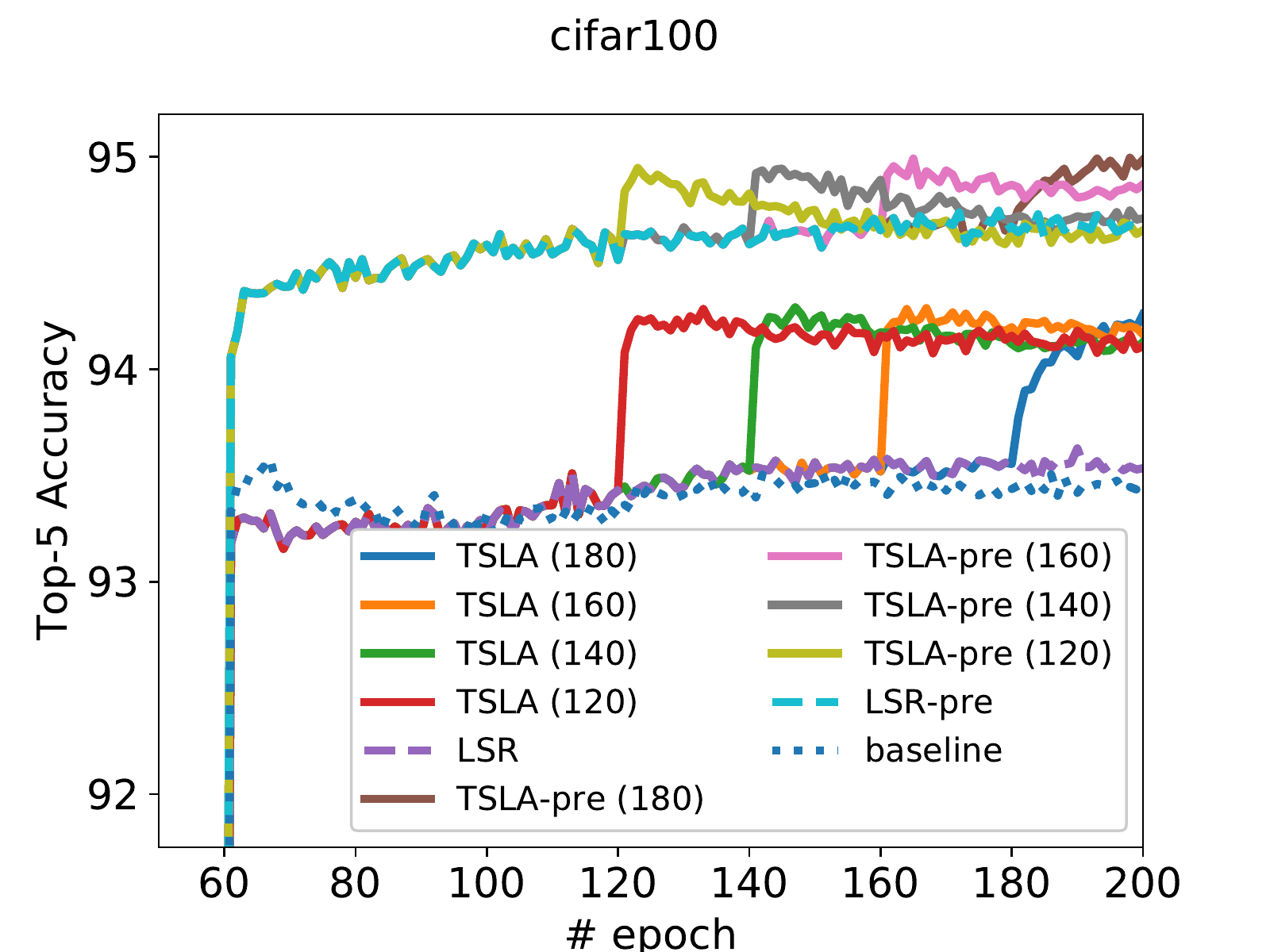}\hspace{-0.15in}
            \includegraphics[width=0.34\textwidth]{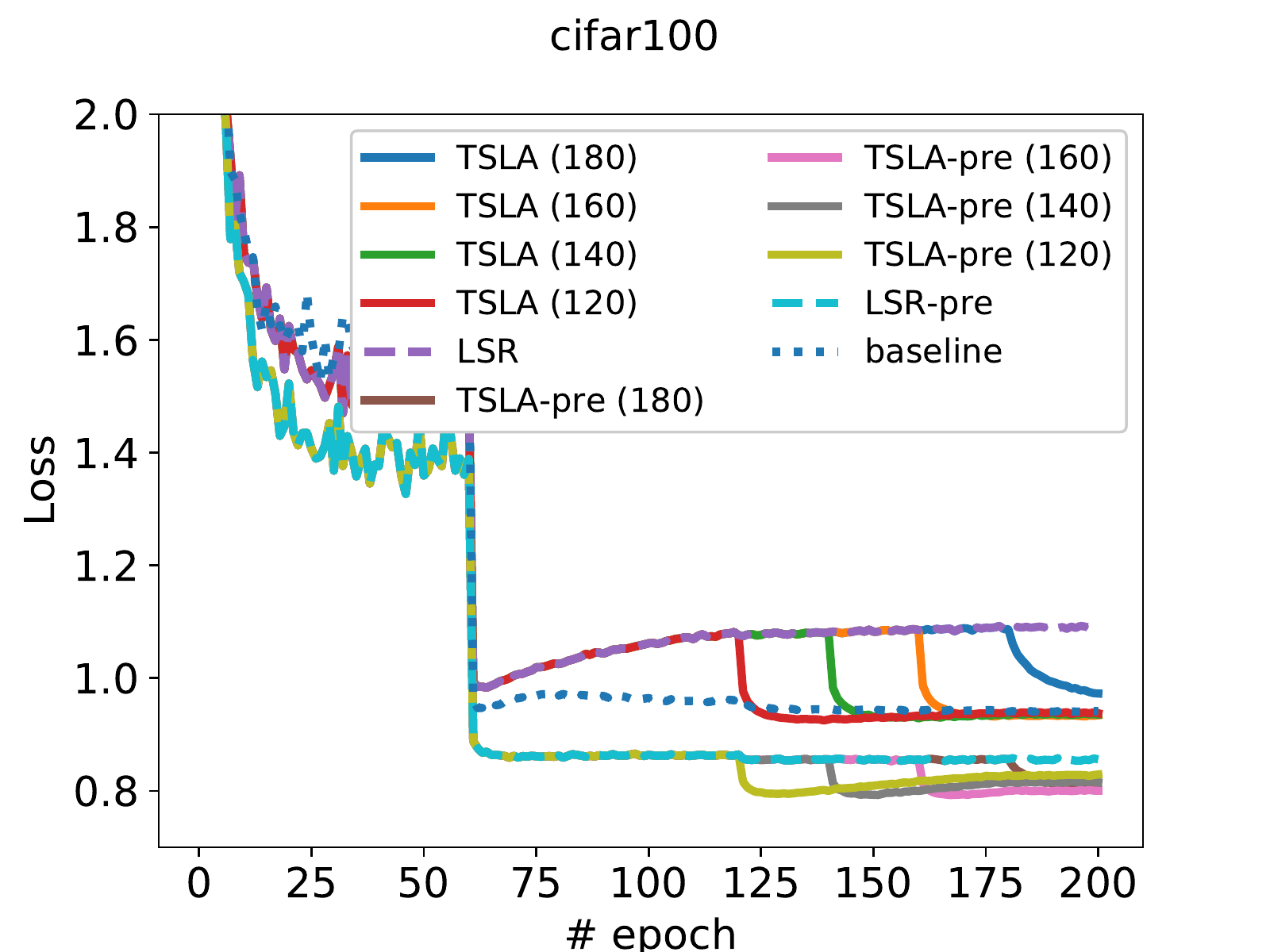}
   \caption{Testing Top-1, Top-5 Accuracy and Loss on ResNet-18 over CIFAR-100. TSLA($s$)/TSLA-pre($s$) meansTSLA/TSLA-pre drops off LSR/LSR-pre after epoch $s$.}
     \label{fig2}
\end{figure}
The total epochs of training ResNet-18~\citep{he2016deep} on CIFRA-100 is set to be 200.The weight decay with the parameter value of $5\times 10^{-4}$ is used. We use $0.1$ as the initial learning rates for all algorithms and divide them by 10 every 60 epochs suggested in~\citep{he2016deep,zagoruyko2016wide}.  For LSR and the first stage of TSLA, the value of smoothing strength $\theta$ is fixed as $\theta=0.1$, which shows the best performance for LSR. We use two different labels $\yh$ to smooth the one-hot label, the uniform distribution over all labels and the distribution predicted by an ImageNet pre-trained model which downloaded directly from PyTorch~\citep{pytorchModelArXiv}. For TSLA, we try to drop off the LSR after $s$ epochs during the training process, where $s \in\{120, 140,160,180\}$. All top-1 and top-5 accuracy on the testing data set are averaged over 5 independent random trails with their standard deviations. We summarize the results in Table~\ref{table2}, where LSR-pre and TSLA-pre indicate LSR and TSLA use the label $\yh$ based on the ImageNet pre-trained model. The results show that LSR-pre/TSLA-pre has a better performance than LSR/TSLA. The reason might be that the pre-trained model-based prediction is closer to the ground truth than the uniform prediction and it has lower variance (smaller $\delta$). Then, TSLA (LSR) with such pre-trained model-based prediction converges faster than TSLA (LSR) with uniform prediction, which verifies our theoretical findings in Sections~\ref{sec:TSLA} (Section~\ref{sec:LSR}). This observation also empirically tells us the selection of the prediction function $\yh$ used for smoothing label is the key to the success of TSLA as well as LSR.  Among all methods, the performance of TSLA-pre is the best. For top-1 accuracy, TSLA-pre(160) outperforms all other algorithms, while for top-5 accuracy, TSLA-pre(180) has the best performance. Finally, we observe from Figure~\ref{fig2} that both TSLA and TSLA-pre converge, while TSLA-pre converges to the lowest objective value. Similarly, the top-1 and op-5 accuracies show the improvements of TSLA and TSLA-pre at the point of dropping off LSR.

\section{Conclusions}
In this paper, we have studied the power of LSR in training  deep neural networks by analyzing SGD with LSR in different non-convex optimization settings. The convergence results show that an appropriate LSR with reduced label variance can help speed up the convergence. We have proposed a simple and efficient strategy so-called TSLA that can incorporate many stochastic algorithms. The basic idea of TSLA is to switch the training from smoothed label to one-hot label.
Integrating TSLA with SGD, we observe from its improved convergence result that TSLA benefits by LSR in the first stage and essentially converges faster in the second stage. Throughout extensive experiments, we have shown that TSLA improves the generalization accuracy of deep models on benchmark data sets.

\section*{Acknowledgements}
We would like to thank Jiasheng Tang and Zhuoning Yuan for several helpful discussions and comments.

\bibliographystyle{plainnat} 
\bibliography{ref}

\begin{thebibliography}{56}
\providecommand{\natexlab}[1]{#1}
\providecommand{\url}[1]{\texttt{#1}}
\expandafter\ifx\csname urlstyle\endcsname\relax
  \providecommand{\doi}[1]{doi: #1}\else
  \providecommand{\doi}{doi: \begingroup \urlstyle{rm}\Url}\fi

\bibitem[Allen-Zhu et~al.(2019)Allen-Zhu, Li, and Song]{allen2019convergence}
Zeyuan Allen-Zhu, Yuanzhi Li, and Zhao Song.
\newblock A convergence theory for deep learning via over-parameterization.
\newblock In \emph{International Conference on Machine Learning}, pages
  242--252, 2019.

\bibitem[Bagherinezhad et~al.(2018)Bagherinezhad, Horton, Rastegari, and
  Farhadi]{bagherinezhad2018label}
Hessam Bagherinezhad, Maxwell Horton, Mohammad Rastegari, and Ali Farhadi.
\newblock Label refinery: Improving imagenet classification through label
  progression.
\newblock \emph{arXiv preprint arXiv:1805.02641}, 2018.

\bibitem[Charles and Papailiopoulos(2018)]{charles2018stability}
Zachary Charles and Dimitris Papailiopoulos.
\newblock Stability and generalization of learning algorithms that converge to
  global optima.
\newblock In \emph{International Conference on Machine Learning}, pages
  745--754, 2018.

\bibitem[Chorowski and Jaitly(2017)]{chorowski2017towards}
Jan Chorowski and Navdeep Jaitly.
\newblock Towards better decoding and language model integration in sequence to
  sequence models.
\newblock \emph{Proc. Interspeech 2017}, pages 523--527, 2017.

\bibitem[Ding et~al.(2019)Ding, Wu, Sun, Guo, and Xia]{ding2019adaptive}
Qianggang Ding, Sifan Wu, Hao Sun, Jiadong Guo, and Shu-Tao Xia.
\newblock Adaptive regularization of labels.
\newblock \emph{arXiv preprint arXiv:1908.05474}, 2019.

\bibitem[Dozat(2016)]{dozat2016incorporating}
Timothy Dozat.
\newblock Incorporating nesterov momentum into adam.
\newblock 2016.

\bibitem[Duchi et~al.(2011)Duchi, Hazan, and Singer]{duchi2011adaptive}
John Duchi, Elad Hazan, and Yoram Singer.
\newblock Adaptive subgradient methods for online learning and stochastic
  optimization.
\newblock \emph{Journal of Machine Learning Research}, 12:\penalty0 2121--2159,
  2011.

\bibitem[Ghadimi and Lan(2013)]{ghadimi2013stochastic}
Saeed Ghadimi and Guanghui Lan.
\newblock Stochastic first-and zeroth-order methods for nonconvex stochastic
  programming.
\newblock \emph{SIAM Journal on Optimization}, 23\penalty0 (4):\penalty0
  2341--2368, 2013.

\bibitem[Ghadimi and Lan(2016)]{DBLP:journals/mp/GhadimiL16}
Saeed Ghadimi and Guanghui Lan.
\newblock Accelerated gradient methods for nonconvex nonlinear and stochastic
  programming.
\newblock \emph{Math. Program.}, 156\penalty0 (1-2):\penalty0 59--99, 2016.

\bibitem[Ghadimi et~al.(2016)Ghadimi, Lan, and Zhang]{ghadimi2016mini}
Saeed Ghadimi, Guanghui Lan, and Hongchao Zhang.
\newblock Mini-batch stochastic approximation methods for nonconvex stochastic
  composite optimization.
\newblock \emph{Mathematical Programming}, 155\penalty0 (1-2):\penalty0
  267--305, 2016.

\bibitem[Goibert and Dohmatob(2019)]{goibert2019adversarial}
Morgane Goibert and Elvis Dohmatob.
\newblock Adversarial robustness via adversarial label-smoothing.
\newblock \emph{arXiv preprint arXiv:1906.11567}, 2019.

\bibitem[He et~al.(2016)He, Zhang, Ren, and Sun]{he2016deep}
Kaiming He, Xiangyu Zhang, Shaoqing Ren, and Jian Sun.
\newblock Deep residual learning for image recognition.
\newblock In \emph{Proceedings of the IEEE conference on computer vision and
  pattern recognition}, pages 770--778, 2016.

\bibitem[He et~al.(2019)He, Zhang, Zhang, Zhang, Xie, and Li]{he2019bag}
Tong He, Zhi Zhang, Hang Zhang, Zhongyue Zhang, Junyuan Xie, and Mu~Li.
\newblock Bag of tricks for image classification with convolutional neural
  networks.
\newblock In \emph{Proceedings of the IEEE Conference on Computer Vision and
  Pattern Recognition}, pages 558--567, 2019.

\bibitem[Hinton et~al.(2012{\natexlab{a}})Hinton, Srivastava, and
  Swersky]{hinton2012neural}
Geoffrey Hinton, Nitish Srivastava, and Kevin Swersky.
\newblock Neural networks for machine learning lecture 6a overview of
  mini-batch gradient descent.
\newblock 2012{\natexlab{a}}.

\bibitem[Hinton et~al.(2014)Hinton, Vinyals, and Dean]{hinton2015distilling}
Geoffrey Hinton, Oriol Vinyals, and Jeff Dean.
\newblock Distilling the knowledge in a neural network.
\newblock In \emph{NeurIPS Deep Learning Workshop}, 2014.

\bibitem[Hinton et~al.(2012{\natexlab{b}})Hinton, Srivastava, Krizhevsky,
  Sutskever, and Salakhutdinov]{hinton2012improving}
Geoffrey~E Hinton, Nitish Srivastava, Alex Krizhevsky, Ilya Sutskever, and
  Ruslan~R Salakhutdinov.
\newblock Improving neural networks by preventing co-adaptation of feature
  detectors.
\newblock \emph{arXiv preprint arXiv:1207.0580}, 2012{\natexlab{b}}.

\bibitem[Ioffe and Szegedy(2015)]{ioffe2015batch}
Sergey Ioffe and Christian Szegedy.
\newblock Batch normalization: Accelerating deep network training by reducing
  internal covariate shift.
\newblock \emph{arXiv preprint arXiv:1502.03167}, 2015.

\bibitem[Karimi et~al.(2016)Karimi, Nutini, and Schmidt]{karimi2016linear}
Hamed Karimi, Julie Nutini, and Mark Schmidt.
\newblock Linear convergence of gradient and proximal-gradient methods under
  the polyak-{\l}ojasiewicz condition.
\newblock In \emph{Joint European Conference on Machine Learning and Knowledge
  Discovery in Databases}, pages 795--811. Springer, 2016.

\bibitem[Khosla et~al.(2011)Khosla, Jayadevaprakash, Yao, and
  Li]{khosla2011novel}
Aditya Khosla, Nityananda Jayadevaprakash, Bangpeng Yao, and Fei-Fei Li.
\newblock Novel dataset for fine-grained image categorization: Stanford dogs.
\newblock In \emph{Proc. CVPR Workshop on Fine-Grained Visual Categorization
  (FGVC)}, volume~2, 2011.

\bibitem[Kingma and Ba(2015)]{kingma2015adam}
Diederik~P Kingma and Jimmy Ba.
\newblock Adam: A method for stochastic optimization.
\newblock In \emph{International Conference on Learning Representations}, 2015.

\bibitem[Kornblith et~al.(2019)Kornblith, Shlens, and Le]{kornblith2019better}
Simon Kornblith, Jonathon Shlens, and Quoc~V Le.
\newblock Do better imagenet models transfer better?
\newblock In \emph{Proceedings of the IEEE conference on computer vision and
  pattern recognition}, pages 2661--2671, 2019.

\bibitem[Krizhevsky and Hinton(2009)]{krizhevsky2009learning}
Alex Krizhevsky and Geoffrey Hinton.
\newblock Learning multiple layers of features from tiny images.
\newblock \emph{Master's thesis, Technical report, University of Tronto}, 2009.

\bibitem[Li et~al.(2020{\natexlab{a}})Li, Zhuang, and
  Orabona]{li2020exponential}
Xiaoyu Li, Zhenxun Zhuang, and Francesco Orabona.
\newblock Exponential step sizes for non-convex optimization.
\newblock \emph{arXiv preprint arXiv:2002.05273}, 2020{\natexlab{a}}.

\bibitem[Li et~al.(2020{\natexlab{b}})Li, Xiong, An, Dou, and Xu]{li2020colam}
Xingjian Li, Haoyi Xiong, Haozhe An, Dejing Dou, and Chengzhong Xu.
\newblock Colam: Co-learning of deep neural networks and soft labels via
  alternating minimization.
\newblock \emph{arXiv preprint arXiv:2004.12443}, 2020{\natexlab{b}}.

\bibitem[Li and Li(2018)]{li2018simple}
Zhize Li and Jian Li.
\newblock A simple proximal stochastic gradient method for nonsmooth nonconvex
  optimization.
\newblock In \emph{Advances in Neural Information Processing Systems}, pages
  5564--5574, 2018.

\bibitem[Lukasik et~al.(2020)Lukasik, Bhojanapalli, Menon, and
  Kumar]{lukasik2020does}
Michal Lukasik, Srinadh Bhojanapalli, Aditya~Krishna Menon, and Sanjiv Kumar.
\newblock Does label smoothing mitigate label noise?
\newblock \emph{arXiv preprint arXiv:2003.02819}, 2020.

\bibitem[M{\"u}ller et~al.(2019)M{\"u}ller, Kornblith, and
  Hinton]{muller2019does}
Rafael M{\"u}ller, Simon Kornblith, and Geoffrey~E Hinton.
\newblock When does label smoothing help?
\newblock In \emph{Advances in Neural Information Processing Systems}, pages
  4696--4705, 2019.

\bibitem[Nesterov(1983)]{citeulike9501961}
Yurii Nesterov.
\newblock {A method of solving a convex programming problem with convergence
  rate $O(1/k^2)$}.
\newblock \emph{Soviet Mathematics Doklady}, 27:\penalty0 372--376, 1983.

\bibitem[Nesterov(1998)]{nesterov1998introductory}
Yurii Nesterov.
\newblock Introductory lectures on convex programming volume i: Basic course.
\newblock 1998.

\bibitem[Nesterov(2004)]{opac-b1104789}
Yurii Nesterov.
\newblock \emph{Introductory lectures on convex optimization : a basic course}.
\newblock Applied optimization. Kluwer Academic Publ., 2004.
\newblock ISBN 1-4020-7553-7.

\bibitem[Nguyen and Salazar(2019)]{nguyen2019transformers}
Toan~Q Nguyen and Julian Salazar.
\newblock Transformers without tears: Improving the normalization of
  self-attention.
\newblock \emph{arXiv preprint arXiv:1910.05895}, 2019.

\bibitem[Pang et~al.(2018)Pang, Du, Dong, and Zhu]{pang2018towards}
Tianyu Pang, Chao Du, Yinpeng Dong, and Jun Zhu.
\newblock Towards robust detection of adversarial examples.
\newblock In \emph{Advances in Neural Information Processing Systems}, pages
  4579--4589, 2018.

\bibitem[Paszke et~al.(2019)Paszke, Gross, Massa, Lerer, Bradbury, Chanan,
  Killeen, Lin, Gimelshein, Antiga, et~al.]{pytorchModelArXiv}
Adam Paszke, Sam Gross, Francisco Massa, Adam Lerer, James Bradbury, Gregory
  Chanan, Trevor Killeen, Zeming Lin, Natalia Gimelshein, Luca Antiga, et~al.
\newblock Pytorch: An imperative style, high-performance deep learning library.
\newblock In \emph{Advances in Neural Information Processing Systems}, pages
  8024--8035, 2019.
\newblock URL \url{https://pytorch.org/docs/stable/torchvision/models.html}.

\bibitem[Pereyra et~al.(2017)Pereyra, Tucker, Chorowski, Kaiser, and
  Hinton]{pereyra2017regularizing}
Gabriel Pereyra, George Tucker, Jan Chorowski, Lukasz Kaiser, and Geoffrey
  Hinton.
\newblock Regularizing neural networks by penalizing confident output
  distributions.
\newblock \emph{arXiv preprint arXiv:1701.06548}, 2017.

\bibitem[Polyak(1964)]{polyak1964some}
Boris~T Polyak.
\newblock Some methods of speeding up the convergence of iteration methods.
\newblock \emph{USSR Computational Mathematics and Mathematical Physics},
  4\penalty0 (5):\penalty0 1--17, 1964.

\bibitem[Polyak(1963)]{polyak1963gradient}
Boris~Teodorovich Polyak.
\newblock Gradient methods for minimizing functionals.
\newblock \emph{Zhurnal Vychislitel'noi Matematiki i Matematicheskoi Fiziki},
  3\penalty0 (4):\penalty0 643--653, 1963.

\bibitem[Reddi et~al.(2018)Reddi, Kale, and Kumar]{reddi2018convergence}
Sashank~J Reddi, Satyen Kale, and Sanjiv Kumar.
\newblock On the convergence of adam and beyond.
\newblock In \emph{International Conference on Learning Representations}, 2018.

\bibitem[Reed et~al.(2014)Reed, Lee, Anguelov, Szegedy, Erhan, and
  Rabinovich]{reed2014training}
Scott Reed, Honglak Lee, Dragomir Anguelov, Christian Szegedy, Dumitru Erhan,
  and Andrew Rabinovich.
\newblock Training deep neural networks on noisy labels with bootstrapping.
\newblock \emph{arXiv preprint arXiv:1412.6596}, 2014.

\bibitem[Robbins and Monro(1951)]{robbins1951stochastic}
Herbert Robbins and Sutton Monro.
\newblock A stochastic approximation method.
\newblock \emph{The annals of mathematical statistics}, pages 400--407, 1951.

\bibitem[Seo et~al.(2020)Seo, Jung, and Lee]{seo2020self}
Jin-Woo Seo, Hong-Gyu Jung, and Seong-Whan Lee.
\newblock Self-augmentation: Generalizing deep networks to unseen classes for
  few-shot learning.
\newblock \emph{arXiv preprint arXiv:2004.00251}, 2020.

\bibitem[Shen et~al.(2019)Shen, Peng, Zhang, and Fan]{shen2019defending}
Chaomin Shen, Yaxin Peng, Guixu Zhang, and Jinsong Fan.
\newblock Defending against adversarial attacks by suppressing the largest
  eigenvalue of fisher information matrix.
\newblock \emph{arXiv preprint arXiv:1909.06137}, 2019.

\bibitem[Simard et~al.(1998)Simard, LeCun, Denker, and
  Victorri]{simard1998transformation}
Patrice~Y Simard, Yann~A LeCun, John~S Denker, and Bernard Victorri.
\newblock Transformation invariance in pattern recognition—tangent distance
  and tangent propagation.
\newblock In \emph{Neural networks: tricks of the trade}, pages 239--274.
  Springer, 1998.

\bibitem[Szegedy et~al.(2016)Szegedy, Vanhoucke, Ioffe, Shlens, and
  Wojna]{szegedy2016rethinking}
Christian Szegedy, Vincent Vanhoucke, Sergey Ioffe, Jon Shlens, and Zbigniew
  Wojna.
\newblock Rethinking the inception architecture for computer vision.
\newblock In \emph{Proceedings of the IEEE Conference on Computer Vision and
  Pattern Recognition}, pages 2818--2826, 2016.

\bibitem[Vaswani et~al.(2017)Vaswani, Shazeer, Parmar, Uszkoreit, Jones, Gomez,
  Kaiser, and Polosukhin]{vaswani2017attention}
Ashish Vaswani, Noam Shazeer, Niki Parmar, Jakob Uszkoreit, Llion Jones,
  Aidan~N Gomez, Lukasz Kaiser, and Illia Polosukhin.
\newblock Attention is all you need.
\newblock In \emph{Advances in Neural Information Processing Systems}, pages
  5998--6008, 2017.

\bibitem[Wah et~al.(2011)Wah, Branson, Welinder, Perona, and
  Belongie]{WahCUB_200_2011}
C.~Wah, S.~Branson, P.~Welinder, P.~Perona, and S.~Belongie.
\newblock {The Caltech-UCSD Birds-200-2011 Dataset}.
\newblock Technical report, 2011.

\bibitem[Wang et~al.(2020)Wang, Tu, Shi, and Liu]{wang2020inference}
Shuo Wang, Zhaopeng Tu, Shuming Shi, and Yang Liu.
\newblock On the inference calibration of neural machine translation.
\newblock \emph{arXiv preprint arXiv:2005.00963}, 2020.

\bibitem[Wang et~al.(2019)Wang, Ji, Zhou, Liang, and
  Tarokh]{wang2019spiderboost}
Zhe Wang, Kaiyi Ji, Yi~Zhou, Yingbin Liang, and Vahid Tarokh.
\newblock Spiderboost and momentum: Faster variance reduction algorithms.
\newblock In \emph{Advances in Neural Information Processing Systems}, pages
  2403--2413, 2019.

\bibitem[Xie et~al.(2016)Xie, Wang, Wei, Wang, and Tian]{xie2016disturblabel}
Lingxi Xie, Jingdong Wang, Zhen Wei, Meng Wang, and Qi~Tian.
\newblock Disturblabel: Regularizing cnn on the loss layer.
\newblock In \emph{Proceedings of the IEEE Conference on Computer Vision and
  Pattern Recognition}, pages 4753--4762, 2016.

\bibitem[Yan et~al.(2018)Yan, Yang, Li, Lin, and Yang]{yangnonconvexmo}
Yan Yan, Tianbao Yang, Zhe Li, Qihang Lin, and Yi~Yang.
\newblock A unified analysis of stochastic momentum methods for deep learning.
\newblock In \emph{International Joint Conference on Artificial Intelligence},
  pages 2955--2961, 2018.

\bibitem[Yuan et~al.(2019{\natexlab{a}})Yuan, Tay, Li, Wang, and
  Feng]{yuan2019revisit}
Li~Yuan, Francis~EH Tay, Guilin Li, Tao Wang, and Jiashi Feng.
\newblock Revisit knowledge distillation: a teacher-free framework.
\newblock \emph{arXiv preprint arXiv:1909.11723}, 2019{\natexlab{a}}.

\bibitem[Yuan et~al.(2019{\natexlab{b}})Yuan, Yan, Jin, and
  Yang]{yuan2019stagewise}
Zhuoning Yuan, Yan Yan, Rong Jin, and Tianbao Yang.
\newblock Stagewise training accelerates convergence of testing error over sgd.
\newblock In \emph{Advances in Neural Information Processing Systems}, pages
  2604--2614, 2019{\natexlab{b}}.

\bibitem[Zagoruyko and Komodakis(2016)]{zagoruyko2016wide}
Sergey Zagoruyko and Nikos Komodakis.
\newblock Wide residual networks.
\newblock \emph{arXiv preprint arXiv:1605.07146}, 2016.

\bibitem[Zeiler(2012)]{zeiler2012adadelta}
Matthew~D Zeiler.
\newblock Adadelta: an adaptive learning rate method.
\newblock \emph{arXiv preprint arXiv:1212.5701}, 2012.

\bibitem[Zeyer et~al.(2018)Zeyer, Irie, Schl{\"u}ter, and
  Ney]{zeyer2018improved}
Albert Zeyer, Kazuki Irie, Ralf Schl{\"u}ter, and Hermann Ney.
\newblock Improved training of end-to-end attention models for speech
  recognition.
\newblock \emph{Proc. Interspeech 2018}, pages 7--11, 2018.

\bibitem[Zhang et~al.(2018)Zhang, Cisse, Dauphin, and
  Lopez-Paz]{zhang2018mixup}
Hongyi Zhang, Moustapha Cisse, Yann~N Dauphin, and David Lopez-Paz.
\newblock mixup: Beyond empirical risk minimization.
\newblock In \emph{International Conference on Learning Representations}, 2018.

\bibitem[Zoph et~al.(2018)Zoph, Vasudevan, Shlens, and Le]{zoph2018learning}
Barret Zoph, Vijay Vasudevan, Jonathon Shlens, and Quoc~V Le.
\newblock Learning transferable architectures for scalable image recognition.
\newblock In \emph{Proceedings of the IEEE Conference on Computer Vision and
  Pattern Recognition}, pages 8697--8710, 2018.

\end{thebibliography}

\newpage
\appendix
\section{Technical Lemma}
Recall that the optimization problem is 
\begin{align}\label{app:opt:prob}
   \min_{\w\in\W} F(\w) := \E_{
   (\x,\y)}\left[\ell(\y, f(\w;\x)) \right],
\end{align}
where the cross-entropy loss function $\ell$ is given by
\begin{align}\label{app:loss:CE}
   \ell(\y,f(\w;\x)) = \sum_{i=1}^{K} -y_i\log\left(\frac{\exp(f_i(\w;\x))}{\sum_{j=1}^{K}\exp(f_j(\w;\x))}\right).
\end{align}
If we set
\begin{align}\label{app:soft:max}
    p(\w;\x) =(p_1(\w;\x), \dots, p_K(\w;\x))\in\R^K, \quad
    p_i(\w;\x) =-\log\left(\frac{\exp(f_i(\w;\x))}{\sum_{j=1}^{K}\exp(f_j(\w;\x))}\right),
\end{align}
the problem (\ref{app:opt:prob}) becomes
\begin{align}\label{app:opt:CE}
   \min_{\w\in\W} F(\w) := \E_{
   (\x,\y)}\left[\langle \y, p(\w;\x)\rangle \right].
\end{align}
Then the stochastic gradient with respective to $\w$ is
\begin{align}\label{app:stoc:grad}
    \nabla \ell(\y, f(\w;\x)) = \langle \y, \nabla p(\w;\x)\rangle.
\end{align}

\begin{lemma}\label{lem:var:1}
Under Assumption~\ref{ass:2} (i), we have
\begin{align*}
\E\left[ \left\| \nabla \ell(\yT_t, f(\w_t;\x_t)) - \nabla F(\w_t) \right\|^2\right] \le (1-\theta) \sigma^2 +\theta \delta \sigma^2.
\end{align*}
\end{lemma}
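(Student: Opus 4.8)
The plan is to exploit the fact that the stochastic gradient is \emph{linear} in the label. From the identities in the appendix, $\nabla \ell(\y, f(\w;\x)) = \langle \y, \nabla p(\w;\x)\rangle$, so substituting the smoothed label $\yT_t = (1-\theta)\y_t + \theta\yh_t$ and using bilinearity of $\langle\cdot,\cdot\rangle$ immediately gives the decomposition
\begin{align*}
\nabla \ell(\yT_t, f(\w_t;\x_t)) = (1-\theta)\,\nabla \ell(\y_t, f(\w_t;\x_t)) + \theta\,\nabla \ell(\yh_t, f(\w_t;\x_t)).
\end{align*}
This is the one observation that makes everything work, and I would state it first. After that the argument is essentially a variance bound for a convex combination of two (correlated) stochastic gradients.

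Next I would subtract $\nabla F(\w_t)$ and use $(1-\theta)+\theta = 1$ to write the deviation itself as a convex combination:
\begin{align*}
\nabla \ell(\yT_t, f(\w_t;\x_t)) - \nabla F(\w_t) = (1-\theta)\bigl(\nabla \ell(\y_t, f(\w_t;\x_t)) - \nabla F(\w_t)\bigr) + \theta\bigl(\nabla \ell(\yh_t, f(\w_t;\x_t)) - \nabla F(\w_t)\bigr).
\end{align*}
Applying convexity of the squared Euclidean norm (Jensen's inequality, since $\theta\in(0,1)$) to the right-hand side bounds $\|\cdot\|^2$ by $(1-\theta)\|\nabla \ell(\y_t,\cdot) - \nabla F\|^2 + \theta\|\nabla \ell(\yh_t,\cdot) - \nabla F\|^2$, which conveniently kills the cross term without needing any independence or unbiasedness of $\nabla \ell(\yh_t,\cdot)$. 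Taking expectations then leaves two terms that are exactly the quantities we have names for: the first equals $\sigma^2$ by Assumption~\ref{ass:2}(i), and the second equals $\widehat\sigma^2 = \delta\sigma^2$ by definition~(\ref{variance:output:sm:label}). Combining yields the stated bound $(1-\theta)\sigma^2 + \theta\delta\sigma^2$.

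I do not expect a genuine obstacle here; the proof is short once the linearity observation is in place. The only point requiring minor care is that the bound for the $\yh_t$ term is being invoked in its ``variance from $\nabla F$'' form rather than a true variance, so I would be explicit that (\ref{variance:output:sm:label}) measures deviation from $\nabla F(\w_t)$ (not from $\E[\nabla\ell(\yh_t,\cdot)]$), which is precisely why the Jensen step—rather than a bias-variance split—is the right tool and why no unbiasedness of $\nabla\ell(\yh_t,\cdot)$ is assumed. I would also note that the inequality is tight in the sense that equality in Jensen fails only when the two deviation vectors differ, so the bound captures the genuine variance reduction when $\delta < 1$.
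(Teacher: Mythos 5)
Your proof is correct and follows essentially the same route as the paper's: the linearity of $\nabla \ell(\cdot, f(\w_t;\x_t))$ in the label gives the convex-combination decomposition, convexity of $\|\cdot\|^2$ kills the cross term, and then Assumption~\ref{ass:2}(i) and the definition in~(\ref{variance:output:sm:label}) yield $(1-\theta)\sigma^2+\theta\delta\sigma^2$. Your explicit remark that~(\ref{variance:output:sm:label}) measures deviation from $\nabla F(\w_t)$ rather than a centered variance, so no unbiasedness of $\nabla\ell(\yh_t,\cdot)$ is needed, is a point the paper leaves implicit and is worth keeping.
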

\begin{proof}
By the facts of $\yT_t = (1-\theta)\y_t + \theta \yh_t$ and the equation in (\ref{app:stoc:grad}), we have
\begin{align*}
\nabla \ell(\yT_t, f(\w_t;\x_t))  = (1-\theta)\nabla \ell(\y_t, f(\w_t;\x_t)) + \theta \nabla \ell(\yh_t, f(\w_t;\x_t)).
\end{align*}
Therefore, 
\begin{align*}
\nonumber& \E\left[ \left\|  \nabla \ell(\yT_t, f(\w_t;\x_t)) - \nabla F(\w_t) \right\|^2\right] \\
\nonumber= & \E\left[ \left\| (1-\theta)[\nabla \ell(\y_t, f(\w_t;\x_t)) - \nabla F(\w_t)] + \theta [ \nabla \ell(\yh_t, f(\w_t;\x_t)) - \nabla F(\w_t)]\right\|^2\right]\\
\nonumber \overset{(a)}{\le}  & (1-\theta)\E\left[ \left\|  \nabla \ell(\y_t, f(\w_t;\x_t)) - \nabla F(\w_t)\right\|^2\right] + \theta \E\left[ \left\| \nabla \ell(\yh_t, f(\w_t;\x_t)) - \nabla F(\w_t)\right\|^2\right]\\
\overset{(b)}{\le}  & (1-\theta) \sigma^2 +\theta \delta \sigma^2,
\end{align*}
where (a) uses the convexity of norm, i.e., $\|(1-\theta)\a + \theta \b\|^2\le (1-\theta)\|\a \|^2 + \theta\|\b\|^2$; (b) uses assumption~\ref{ass:2} (i) and the definitions in (\ref{variance:output:sm:label}), and Assumption~\ref{ass:2} (i).
\end{proof}

\section{Proof of Theorem~\ref{thm:lsr}}\label{app:thm:lsr}
\begin{proof}
By the smoothness of objective function $F(\w)$ in Assumption~\ref{ass:2} (ii) and its remark, we have
\begin{align}\label{thm2:ineq:1}
\nonumber& F(\w_{t+1}) - F(\w_t)\\
\nonumber\le& \langle \nabla F(\w_t), \w_{t+1}-\w_t\rangle  + \frac{L}{2}\| \w_{t+1}-\w_t\|^2 \\
\nonumber\overset{(a)}{=}& -\eta \left\langle \nabla F(\w_t), \nabla \ell(\yT_t, f(\w_t;\x_t)) \right\rangle + \frac{\eta^2 L}{2}\left\|  \nabla \ell(\yT_t, f(\w_t;\x_t))\right\|^2 \\
\nonumber\overset{(b)}{=}& -\frac{\eta}{2} \left\| \nabla F(\w_t)\right\|^2 + \frac{\eta}{2}\left\|  \nabla F(\w_t) - \nabla \ell(\yT_t, f(\w_t;\x_t))\right\|^2 + \frac{\eta(\eta L-1)}{2}\left\|  \nabla \ell(\yT_t, f(\w_t;\x_t))\right\|^2\\
\le& -\frac{\eta}{2} \left\| \nabla F(\w_t)\right\|^2 + \frac{\eta}{2}\left\|  \nabla F(\w_t) - \nabla \ell(\yT_t, f(\w_t;\x_t))\right\|^2,
\end{align}
where (a) is due to the update of $\w_{t+1}$; (b) is due to $\langle \a,-\b\rangle = \frac{1}{2}\left( \|\a-\b\|^2 - \|\a\|^2 - \|\b\|^2\right)$; (c) is due to $\eta = \frac{1}{L}$. Taking the expectation over $(\x_t,\yT_t)$ on the both sides of (\ref{thm2:ineq:1}), we have
\begin{align}\label{thm2:ineq:2}
\nonumber& \E\left[F(\w_{t+1}) - F(\w_t)\right]\\
\nonumber\le& -\frac{\eta}{2} \E\left[\left\| \nabla F(\w_t)\right\|^2\right] + \frac{\eta}{2}\E\left[\left\|  \nabla F(\w_t) - \nabla \ell(\yT_t, f(\w_t;\x_t))\right\|^2\right]\\
\le & -\frac{\eta}{2} \E\left[\| \nabla F(\w_t)\|^2 \right] + \frac{\eta}{2} \left((1-\theta) \sigma^2 +\theta \delta \sigma^2\right).
\end{align}
where the last inequality is due to Lemma~\ref{lem:var:1}.
Then inequality (\ref{thm2:ineq:2}) implies
\begin{align*}
\nonumber \frac{1}{T}\sum_{t=0}^{T-1}\E\left[\|\nabla F(\w_t)\|^2\right]\le  &\frac{2F(\w_{0}) }{\eta T} + (1-\theta) \sigma^2 +\theta \delta \sigma^2\\
\nonumber \overset{(a)}{=} &\frac{2F(\w_{0}) }{\eta T} +  \frac{2\delta}{1+\delta}\sigma^2\\
\overset{(b)}{\le} & \frac{2F(\w_{0}) }{\eta T} + 2\delta \sigma^2,
\end{align*}
where (a) is due to $\theta = \frac{1}{1+\delta}$; (b) is due to $\frac{1}{1+\delta}\le 1$.
\end{proof}

\section{Convergence Analysis of SGD without LSR ($\theta=0$)}\label{supp:baseline}
\begin{thm}\label{thm:0}
Under Assumption \ref{ass:2}, the solutions $w_t$ from Algorithm~\ref{alg:lsr}  with $\theta=0$ satisfy
\begin{align*}
\frac{1}{T}\sum_{t=0}^{T-1}\E\left[\|\nabla F(\w_t)\|^2\right]\le  \frac{2F(\w_{0}) }{\eta T} + \eta L \sigma^2.
\end{align*}
In order to have $\E_R[\|\nabla F(\w_R)\|^2]\le\epsilon^2$, it suffices to set $\eta = \min\left(\frac{1}{L}, \frac{\epsilon^2}{2L\sigma^2}\right)$ and $T = \frac{4F(\w_0)}{\eta\epsilon^2}$, the total complexity is $O\left( \frac{1}{\epsilon^4}\right)$.
\end{thm}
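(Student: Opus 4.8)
The plan is to mirror the descent-lemma argument used for Theorem~\ref{thm:lsr}, but to exploit the fact that with $\theta=0$ the smoothed label reduces to the one-hot label, $\yT_t=\y_t$, so that the stochastic gradient $\g_t:=\nabla\ell(\y_t,f(\w_t;\x_t))$ is an \emph{unbiased} estimator of $\nabla F(\w_t)$ with variance at most $\sigma^2$ by Assumption~\ref{ass:2}~(i). First I would invoke $L$-smoothness (the equivalent inequality stated in the remark after Assumption~\ref{ass:2}) with the update $\w_{t+1}-\w_t=-\eta\g_t$ to obtain
\begin{align*}
F(\w_{t+1})-F(\w_t)\le -\eta\langle\nabla F(\w_t),\g_t\rangle+\frac{\eta^2L}{2}\|\g_t\|^2.
\end{align*}

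Next, I would take the conditional expectation over $(\x_t,\y_t)$ given $\w_t$. The crucial step, and the place where the analysis departs from the LSR case, is that unbiasedness makes the cross term collapse exactly: $\E[\langle\nabla F(\w_t),\g_t\rangle]=\|\nabla F(\w_t)\|^2$, while the second-moment term is handled by the bias--variance decomposition $\E\|\g_t\|^2=\|\nabla F(\w_t)\|^2+\E\|\g_t-\nabla F(\w_t)\|^2\le\|\nabla F(\w_t)\|^2+\sigma^2$. Combining these yields
\begin{align*}
\E[F(\w_{t+1})]-F(\w_t)\le -\eta\Big(1-\frac{\eta L}{2}\Big)\|\nabla F(\w_t)\|^2+\frac{\eta^2L}{2}\sigma^2 .
\end{align*}
I would then restrict to $\eta\le\frac1L$, so that $1-\frac{\eta L}{2}\ge\frac12$ and the coefficient of $\|\nabla F(\w_t)\|^2$ is at most $-\frac{\eta}{2}$. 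Taking total expectation, summing over $t=0,\dots,T-1$, telescoping $\sum_t(\E[F(\w_{t+1})]-\E[F(\w_t)])=\E[F(\w_T)]-F(\w_0)$, and using $\E[F(\w_T)]\ge F_*\ge0$ (the cross-entropy loss is nonnegative) gives after rearrangement
\begin{align*}
\frac1T\sum_{t=0}^{T-1}\E\|\nabla F(\w_t)\|^2\le\frac{2F(\w_0)}{\eta T}+\eta L\sigma^2,
\end{align*}
which is the claimed bound; note that $\E_R[\|\nabla F(\w_R)\|^2]$ equals this average since $R$ is uniform on $\{0,\dots,T-1\}$.

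For the complexity claim I would split the target $\epsilon^2$ evenly across the two terms: forcing the variance term $\eta L\sigma^2\le\frac{\epsilon^2}{2}$ requires $\eta\le\frac{\epsilon^2}{2L\sigma^2}$, which together with the constraint $\eta\le\frac1L$ motivates the choice $\eta=\min\!\big(\frac1L,\frac{\epsilon^2}{2L\sigma^2}\big)$; then forcing $\frac{2F(\w_0)}{\eta T}\le\frac{\epsilon^2}{2}$ gives $T=\frac{4F(\w_0)}{\eta\epsilon^2}$, and the two halves sum to $\epsilon^2$. For small $\epsilon$ the minimum is attained at $\eta=\frac{\epsilon^2}{2L\sigma^2}$, whence $T=\frac{8LF(\w_0)\sigma^2}{\epsilon^4}=O(1/\epsilon^4)$.

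The argument is routine, and the only genuine subtlety is the one I would flag: to obtain the \emph{vanishing} variance term $\eta L\sigma^2$ (rather than a constant $\sigma^2$ that cannot be driven below $\epsilon^2$ by shrinking $\eta$), one must retain the full second-moment term $\frac{\eta^2L}{2}\E\|\g_t\|^2$ and decompose it via unbiasedness, instead of applying the polarization identity $\langle\a,-\b\rangle=\frac12(\|\a-\b\|^2-\|\a\|^2-\|\b\|^2)$ used in the proof of Theorem~\ref{thm:lsr}; that route would only produce a non-decaying $\sigma^2$ and hence the weaker rate. The contrast with Theorem~\ref{thm:lsr} is precisely that unbiasedness reduces the cross term to $\|\nabla F(\w_t)\|^2$, a simplification that is unavailable under LSR where $\nabla\ell(\yh_t,f(\w_t;\x_t))$ is biased.
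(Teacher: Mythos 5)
Your proposal is correct and follows essentially the same route as the paper's proof in Appendix~C: the descent lemma, the collapse of the cross term via unbiasedness, the decomposition $\E\|\nabla \ell(\y_t, f(\w_t;\x_t))\|^2 = \E\|\nabla \ell(\y_t, f(\w_t;\x_t)) - \nabla F(\w_t)\|^2 + \E\|\nabla F(\w_t)\|^2 \le \sigma^2 + \E\|\nabla F(\w_t)\|^2$ under $\eta \le \frac{1}{L}$, telescoping with $F_* \ge 0$, and the identical choices of $\eta$ and $T$. Your closing observation --- that one must retain the second-moment term and exploit unbiasedness rather than apply the polarization identity used for Theorem~\ref{thm:lsr}, since the latter yields a non-decaying variance term --- is precisely the distinction between the paper's two proofs, so it is an accurate reading of why the $\theta = 0$ analysis attains the $\eta L \sigma^2$ term.
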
 
\begin{proof}
By the smoothness of objective function $F(\w)$ in Assumption~\ref{ass:2} (ii) and its remark, we have
\begin{align}\label{thm0:ineq:1}
\nonumber& F(\w_{t+1}) - F(\w_t)\\
\nonumber\le& \langle \nabla F(\w_t), \w_{t+1}-\w_t\rangle  + \frac{L}{2}\| \w_{t+1}-\w_t\|^2 \\
\overset{(a)}{=}& -\eta \left\langle \nabla F(\w_t), \nabla \ell(\y_t, f(\w_t;\x_t)) \right\rangle + \frac{\eta^2 L}{2}\left\|  \nabla \ell(\y_t, f(\w_t;\x_t))\right\|^2,
\end{align}
where (a) is due to the update of $\w_{t+1}$. Taking the expectation over $(\x_t;\y_t)$ on the both sides of (\ref{thm0:ineq:1}), we have
\begin{align}\label{thm:0:ineq:2}
\nonumber &\E\left[F(\w_{t+1}) - F(\w_t) \right] \\
\nonumber \overset{(a)}{\le}& -\eta \E\left[\| \nabla F(\w_t)\|^2 \right] + \frac{\eta^2 L}{2}\E\left[\left\| \nabla \ell(\y_t, f(\w_t;\x_t)) - \nabla F(\w_t)+\nabla F(\w_t)\right\|^2\right]\\
\nonumber \overset{(b)}{=} & -\eta\E\left[\|\nabla F(\w_t)\|^2\right] +\frac{\eta^2 L}{2}\E\left[\left\| \nabla \ell(\y_t, f(\w_t;\x_t)) - \nabla F(\w_t)\right\|^2\right] + \frac{\eta^2 L}{2}\E\left[\left\|\nabla F(\w_t)\right\|^2\right]\\
\overset{(c)}{\le} & -\frac{\eta}{2}\E\left[\|\nabla F(\w_t)\|^2\right]+ \frac{\eta^2 L}{2} \sigma^2 . 
\end{align}
where (a) and (b) use Assumption~\ref{ass:2} (i); (c) uses the facts that $\eta\le\frac{1}{L}$ and Assumption~\ref{ass:2} (i).
The inequality (\ref{thm:0:ineq:2}) implies
\begin{align*}
\frac{1}{T}\sum_{t=0}^{T-1}\E\left[\|\nabla F(\w_t)\|^2\right]\le  \frac{2F(\w_{0}) }{\eta T} + \eta L \sigma^2.
\end{align*}
By setting $\eta \le \frac{\epsilon^2}{2L\sigma^2}$ and $T = \frac{4F(\w_0)}{\eta\epsilon^2}$, we have $\frac{1}{T}\sum_{t=0}^{T-1}\E\left[\|\nabla F(\w_t)\|^2\right]\le\epsilon^2$. Thus the total complexity is in the order of $O\left( \frac{1}{\eta \epsilon^2}\right) = O\left( \frac{1}{\epsilon^4}\right)$.
\end{proof}

\section{Proof of Theorem~\ref{thm:drop}}\label{app:thm:drop}
\begin{proof}
Following the similar analysis of inequality (\ref{thm2:ineq:2}) from the proof of Theorem~\ref{thm:lsr}, we have
\begin{align}\label{thm5:ineq:1}
\nonumber &\E\left[F(\w_{t+1}) - F(\w_t) \right] \\
\le & -\frac{\eta_1}{2} \E\left[\| \nabla F(\w_t)\|^2 \right] + \frac{\eta_1}{2} \left((1-\theta) \sigma^2 +\theta \delta \sigma^2\right).
\end{align}
Using the condition in Assumption~\ref{ass:3} we can simplify the inequality from (\ref{thm5:ineq:1}) as
\begin{align*}
&\E\left[F(\w_{t+1}) -F_*\right] \\
\le & (1-\eta_1\mu)\E\left[F(\w_{t}) -F_* \right]+ \frac{\eta_1}{2} \left((1-\theta) \sigma^2 +\theta \delta \sigma^2\right)\\
\le &  \left(1 - \eta_1\mu \right)^{t+1}\E\left[F(\w_0) -F_*\right]  + \frac{\eta_1}{2} \left((1-\theta) \sigma^2 +\theta \delta \sigma^2\right)\sum_{i=0}^{t} \left(1 - \eta_1\mu/2\right)^i\\
\le &  \left(1 - \eta_1\mu \right)^{t+1}\E\left[F(\w_0)\right]  + \frac{\eta_1}{2} \left((1-\theta) \sigma^2 +\theta \delta \sigma^2\right)\sum_{i=0}^{t} \left(1 - \eta_1\mu/2\right)^i,
\end{align*}
where the last inequality is due to the definition of loss function that $F_*\ge 0$.
Since $\eta_1 \le \frac{1}{L} < \frac{1}{\mu}$, then $\left(1 - \eta_1\mu \right)^{t+1} < \exp(-\eta_1\mu(t+1))$ and $ \sum_{i=0}^{t} \left(1 - \eta_1\mu\right)^i \le \frac{1}{\eta_1\mu}$. 
As a result, for any $T_1$, we have
\begin{align}\label{thm5:stage1:ineq1}
    \E\left[F(\w_{T_1}) -F_*\right] \leq \exp(-\eta_1\mu T_1)F(\w_0) +\frac{1}{2\mu} \left((1-\theta) \sigma^2 +\theta\delta \sigma^2\right).
\end{align}
Let $\theta = \frac{1}{1+\delta}$ and 
$\widehat\sigma^2 := (1-\theta) \sigma^2 +\theta \delta \sigma^2 = \frac{2\delta}{1+\delta} \sigma^2$
then $\frac{1}{2\mu} \left((1-\theta) \sigma^2 +\theta\delta \sigma^2\right)\le F(\w_0) $ since $\delta$ is small enough and $\eta_1 L \le 1$. By setting
\begin{align*}
T_1= \log\left( \frac{2\mu F(\w_0)}{\widehat\sigma^2 }\right)/(\eta_1\mu)
\end{align*}
we have
\begin{align}\label{thm5:ineq:3}
    \E\left[F(\w_{T_1}) -F_*\right] \le \frac{\widehat\sigma^2}{\mu} \le \frac{2\delta \sigma^2}{\mu}. 
\end{align}
After $T_1$ iterations, we drop off the label smoothing, i.e. $\theta=0$, then we know for any $t\ge T_1$, 
following the inequality (\ref{thm:0:ineq:2}) from the proof of Theorem~\ref{thm:0}, we have
\begin{align*}
\nonumber \E\left[F(\w_{t+1}) - F(\w_t) \right] 
\le & -\frac{\eta_2}{2}\E\left[\|\nabla F(\w_t)\|^2\right]+ \frac{\eta_2^2L\sigma^2}{2}.
\end{align*}
Therefore, we get
\begin{align}\label{thm5:ineq:4}
\nonumber\frac{1}{T_2}\sum_{t=T_1}^{T_1+T_2-1}\E\left[\|\nabla F(\w_t)\|^2\right]
\le &  \frac{2}{\eta_2 T_2}\E\left[F(\w_{T_1}) - F(\w_{T_1+T_2-1})\right] + \eta_2 L\sigma^2\\
\nonumber\overset{(a)}{\le} &  \frac{2}{\eta_2 T_2}\E\left[F(\w_{T_1}) - F_*\right] + \eta_2 L\sigma^2\\
\overset{(\ref{thm5:ineq:3})}{\le} & \frac{4 \delta \sigma^2 }{\mu \eta_2 T_2}+ \eta_2 L\sigma^2,
\end{align}
where (a) is due to $F(\w_{T_1+T_2-1})\ge F_*$.
By setting $\eta_2 = \frac{\epsilon^2}{2L\sigma^2}$ and $T_2 = \frac{8 \delta \sigma^2 }{\mu \eta_2\epsilon^2}$, we have $$\frac{1}{T_2}\sum_{t=T_1}^{T_1+T_2-1}\E\left[\|\nabla F(\w_t)\|^2\right] \le\epsilon^2.$$
\end{proof}

\end{document}